\documentclass{article}

    \PassOptionsToPackage{numbers, compress}{natbib}


    \usepackage[preprint]{neurips_2020}



\usepackage[utf8]{inputenc} 
\usepackage[T1]{fontenc}    
\usepackage{hyperref}       
\usepackage{url}            
\usepackage{booktabs}       
\usepackage{amsfonts}       
\usepackage{nicefrac}       
\usepackage{microtype}      

\usepackage{microtype}
\usepackage{graphicx}
\usepackage{subcaption, float}
\usepackage{booktabs} 
\usepackage{xcolor}


\usepackage{amsmath,amsfonts,bm}









\def\eqref#1{equation~\ref{#1}}









\def\1{\bm{1}}








\def\vtheta{{\bm{\theta}}}
\def\vphi{{\bm{\phi}}}

\def\vc{{\bm{c}}}
\def\vd{{\bm{d}}}

\def\vf{{\bm{f}}}
\def\vg{{\bm{g}}}
\def\vh{{\bm{h}}}

\def\vq{{\bm{q}}}

\def\vv{{\bm{v}}}
\def\vw{{\bm{w}}}

\def\vy{{\bm{y}}}
\def\vz{{\bm{z}}}



\def\mM{{\bm{M}}}

\DeclareMathAlphabet{\mathsfit}{\encodingdefault}{\sfdefault}{m}{sl}
\SetMathAlphabet{\mathsfit}{bold}{\encodingdefault}{\sfdefault}{bx}{n}











\newcommand{\E}{\mathbb{E}}




\usepackage[ruled,vlined,linesnumbered,noresetcount]{algorithm2e}

\usepackage{amsthm}
\newtheorem{theorem}{Theorem}
\newtheorem{remark}{Remark}
\newtheorem{lemma}{Lemma}
\newtheorem{corollary}{Corollary}

\def\lemref#1{Lemma~\ref{#1}}
\def\thmref#1{Theorem~\ref{#1}}

\title{FedGAN: Federated Generative Adversarial Networks for Distributed Data}

%

\author{%
  Mohammad Rasouli, Tao Sun, Ram Rajagopal, \\
  Stanford University, Stanford, CA,  94305 \\
  \texttt{\{rasoulim, luke18, ramr\}@stanford.edu} \\
}

\begin{document}

\maketitle

\begin{abstract}
  We propose Federated Generative Adversarial Network (FedGAN) for training a GAN across distributed sources of non-independent-and-identically-distributed data sources subject to communication and privacy constraints. Our algorithm uses local generators and discriminators which are periodically synced via an intermediary that averages and broadcasts the generator and discriminator parameters. We theoretically prove the convergence of FedGAN with both equal and two time-scale updates of generator and discriminator, under standard assumptions, using stochastic approximations and communication efficient stochastic gradient descents. We experiment FedGAN on toy examples (2D system, mixed Gaussian, and Swiss role), image datasets (MNIST, CIFAR-10, and CelebA), and time series datasets (household electricity consumption and electric vehicle charging sessions). We show FedGAN converges and has similar performance to general distributed GAN, while reduces communication complexity. We also show its robustness to reduced communications.
\end{abstract}

\section{Introduction}
\label{intro}
Generative adversarial network (GAN) is proposed by \citep{goodfellow2014generative} for generating fake data similar to the original data and has found wide applications. In lots of cases data is distributed across multiple sources, with data in each source being too limited in size and diversity to locally train an accurate GAN for the entire population of distributed data. On the other hand, due to privacy constraints data can not be shared or pooled centrally. Therefore, a distributed GAN algorithm is required for training a GAN representing the entire population; such distributed GAN also allows generating publicly accessible data \citep{yonetani2019decentralized}. Current distributed GAN algorithms require large communication bandwidth among data sources or between data sources and an intermediary (to ensure convergence) due to architectures that separate generators from discriminators \citep{augenstein2019generative,hardy2018md}. But in many applications communication bandwidth is limited, e.g. energy, mobile communications, finance, and sales \citep{yang2019federated}. 
Communication-efficient distributed GAN is an open problem. We propose an architecture which places local discriminators with local generators, synced occasionally through an intermediary.

Communication-efficient learning across multiple data sources, which are also subject to privacy constraints, is studied under federated learning \citep{konevcnyEtal-16arXiv, McmahanEtal-16arXiv}. Therefore, we refer to distributed communication-efficient GAN as federated GAN (FedGAN). FedGAN can extend GAN applications to federated learning. For example, in lots of cases, even the pooled dataset is not large enough to learn an accurate model, and FedGAN can help producing more data similar to the original data for better training \citep{bowles2018gan}.

The major challenge in GAN algorithms is their convergence since cost functions may not converge using gradient descent in the minimax game between the discriminator and the generator. Convergence is also the major challenge in federated learning since each source updates local model using multiple stochastic gradient descents (SGDs) before syncing with others through the intermediary \citep{dinh2019federated}; it becomes even more challenging when data at different sources are not independent and identically distributed (non-iid) \citep{wang2019adaptive}. Convergence of distributed GANs is an open problem. We theoretically prove our FedGAN algorithm converges even with non-iid  sources under equal and two time step updates. We connect results from stochastic approximation for GAN convergence and communication-efficient SGD for federated learning to address FedGAN convergence.

We experiment our FedGAN on popular toy examples in the GAN literature including 2D system, mixed Gaussians, and Swiss role, on image datasets including MNIST, CIFAR-10, and CelebA, and on time series data form energy industry including household electricity consumption and electric vehicles charging, to show its convergence, efficiency, and robustness to reduced communications. We use energy industry since it is not currently studied in the federated learning but often involves distributed data, subject to privacy constraints, with limited communication infrastructure, and mostly in time series format \citep{stankovic2016measuring,balachandran2014bandwidth}. To the best of our knowledge, there is no other algorithm for communication efficient distributed GAN to compare ours with. We compare the performance of FedGAN with a typical distributed GAN that has local discriminators and one generator in the intermediary which communicate frequently (similar to that in \citep{augenstein2019generative}).

The rest of the paper is as follows. We first review the relevant literature (Section \ref{sec:liter_rev}). Then we propose our FedGAN algorithm (Section \ref{sec:algo}), discuss its communication and computation properties (Section \ref{sec:communi_comput}), and theoretically prove its convergence even when the data sources are non-iid (Section \ref{sec:conv_analy}). Next, we run experiments on toy examples (Section \ref{sec:toy}), image dataset (Section \ref{sec:exp_img}), and time series dataset (Section \ref{sec:exp_energy}). Finally, we point out to some observations in our experiments and some open problems (Section \ref{sec:concl}). Part of proofs and experiments are in appendices.
	
\section{Literature Review}
\label{sec:liter_rev}
This work relates to three literature, GAN convergence, distributed GAN, and federated learning.

Convergence of GAN has been studied through convergence of dynamical systems using stochastic approximation by modeling GAN as a continuous two-player game solvable by gradient-based multi-agent learning algorithms \citep{chasnov2019convergence}. With equal step sizes of generator and discriminator updates the problem is single time-scale stochastic approximation \citep{konda2004convergence} for which  certain conditions are developed (and tested \citep{mescheder2018training}) for convergence of the ODE representing the stochastic approximation of GAN, e.g. Hurwitz Jacobian at equilibrium \citep{khalil2002nonlinear}, negative definite Hessians with small learning rate
\citep{nowozin2016f,ratliff2013characterization}, consensus optimization regularization  \citep{nagarajan2017gradient}, and non-imaginary eigenvalues of the spectrum of the gradient vector field Jacobian \citep{mescheder2017numerics}. With different step sizes of generator and discriminator updates the problem is two time-scale stochastic approximation  \citep{borkar1997stochastic} for which convergence is shown under  global \citep{borkar1997stochastic} and local asymptotic stability assumptions  \citep{karmakar2017two,borkar2009stochastic}. 
\citep{heusel2017gans} proposes a two time-scale update rule (TTUR) for GAN with SGD and shows convergence under the those stability conditions. All of the above papers are for centrally trained GAN, while our FedGAN algorithm is distributed.

Distributed GANs are proposed recently. For iid data sources, \citep{hardy2018md} proposes a single generator at the intermediary and distributed discriminators which communicate generated data and the corresponding error. Also, discriminators exchange their parameters occasionally to avoid overfitting to local data. \citep{hardy2018gossiping} utilizes a gossip approach for distributed GAN which does not require an intermediary server. For non-iid data sources, \citep{yonetani2019decentralized} trains individual discriminators and updates the centralized generator to fool the weakest discriminator. All of the above algorithms require large communications, while our FedGAN is communication efficient. Also, to the best of our knowledge there is no theoretical result for convergence of distributed GAN, while we provide such results for FedGAN.

Federated learning, proposed for communication efficient distributed learning, runs parallel SGD on randomly selected subset of all the agents and updates the parameters with the averages of the trained models through an intermediary once in a while. Its convergence is proved for convex objective with iid data \citep{stich2018local},  non-convex objective with iid data
\citep{wang2018cooperative}, strongly convex objective and non-iid data with all responsive agents
 \citep{wang2019adaptive} and some non-responsive agents  
 \citep{li2019convergence}, and non-convex objective and non-iid data
\citep{yu2019parallel}. Our FedGAN study is with non-convex objective and non-iid data, and also involves GAN convergence challenges on top of distributed training issues.

A distributed GAN with communication constraints is proposed by \cite{augenstein2019generative} under FedAvg-GAN which has distributed discriminators but centralized generator, similar to distributed GAN in \cite{hardy2018md} with the difference of selecting a subset of agents for discriminator updating. This approach does not fully address the large communications required for distributed GAN as it needs communications in each generator update iteration. We overcome this issue by placing both the discriminators and the generators at the agents, and then communicating only every $K$ steps with intermediary to sync parameters across agents. An architecture similar to our FedGAN is envisioned in  \citep{Rajagopal2019FederatedAL} and \citep{hardy2018md}, but they do not provide theoretical studies for convergence, and their experiment results are very limited. We provide  a complete study in this paper.

\section{FedGAN Algorithm}
In this section we propose FedGAN algorithm, discuss its communication and computation complexity, and theoretically prove its convergence.

\subsection{Model and Algorithm}
\label{sec:algo}
We denote the training iteration horizon by $N$ and index time by $n$. Consider agents $\{1,2,..., B\}$  with local dataset of agent $i$ denoted by $\mathcal{R}_i$ and weight of  agent $i$ denoted by $p_i:=\frac{|\mathcal{R}_i|}{\sum_{j=1,...,n} |\mathcal{R}_j|}$.  $\mathcal{R}_i$ data comes from an individual distributions for agent $i$ (data in non-iid across agents). Assume each agent has local discriminator and generator with corresponding parameter vectors $\vw_n^i$ and $\vtheta_n^i$, loss functions $\mathcal{L}^i_D$ and $\mathcal{L}^i_G$, local true gradients $\vh^i(\vtheta^i_n, \vw^i_n)$  and $\vg^i(\vtheta^i_n, \vw^i_n)$, local stochastic gradients $\tilde{\vg}^i(\vtheta^i_n, \vw^i_n)$ and $\tilde{\vh}^i(\vtheta^i_n, \vw^i_n)$, and learning rates $a(n)$ and $b(n)$ at time $n$. We assume the learning rates are the same across agents. The gradients $\tilde{\vh}^i(\vtheta, \vw)$ and $\tilde{\vg}^i(\vtheta, \vw)$ are stochastic, since every agent uses a mini-batch of his local data for SGD.

There is an intermediary whose role is syncing the local generators and discriminators. The intermediary parameters at time $n$ are denoted by $\vw_n$ and $\vtheta_n$. Note that the intermediary does not train a generator or discriminator itself, and $\vw_n$ and $\vtheta_n$ are only obtained by averaging $\vtheta^i_n$ and $\vw^i_n$ across $i$.


The FedGAN algorithm is presented in Algorithm \ref{alg: fedgan}. All agents run SGDs for training local generators and discriminators using local data. Every $K$ time steps of local gradient updates, the agents send their parameters to the intermediary which in turn sends back the average parameters to all agents to sync. We refer to $K$ by synchronization interval. $a(n)$, $b(n)$ and  $K$ are tuning parameters of the FedGAN algorithm.

\begin{remark} In our model privacy is the main reason agents do not share data and rather send model parameters. Adding privacy noise to the model parameters can further preserve privacy. We leave this as a future direction for this research. Also, we assume all agents participate in the communication process. There is a literature on federated learning which studies if only part of the agents send their parameters due to communication failures \citep{konevcnyEtal-16arXiv}. This could be an extension to this paper for FedGAN.
\end{remark}

	\begin{algorithm}[tbp]
		\SetAlgoLined
		\SetKwInOut{Input}{Input~}
		\Input{Set training period $N$. Initialize local discriminator and generator parameters for each agent $i$: $\vw^i_{0}=\hat{\vw}$ and $\vtheta^i_{0}=\hat{\vtheta}$, $\forall i \in \{1,2,...,B\}$. Set the learning rates of the discriminator and the generator at iteration $n$, $a(n)$ and $b(n)$ for $n=1,2,...,N-1$.  Also set synchronization interval $K$.}
		\For{$n=1,2,...,N-1$}{
			Each agent $i$ calculates local stochastic gradient $\tilde{\vg}^i({\vtheta}_{n}^i, {\vw}_{n}^i)$  from $\mathcal{R}_i$ and $\tilde{\vh}^i({\vtheta}_{n}^i,{\vw}_{n}^i)$ from $\mathcal{R}_i$ and fake data generated by the local generator.
			
			Each agent $i$ updates its local parameter in parallel to others via
			\begin{equation}
			\label{eq:algo_update}
			\left\{ \begin{array}{l}
			\vw_{n}^i = \vw_{n-1}^i+a(n-1) \tilde{\vg}_i({\vtheta}_{n-1}^i, {\vw}_{n-1}^i)\\
			\vtheta_{n}^i = \vtheta_{n-1}^i+b(n-1) \tilde{\vh}_i({\vtheta}_{n-1}^i, {\vw}_{n-1}^i)
			\end{array} \right.
			\end{equation}\
			
			\If{$n\mod K=0$ }{
				All agents send parameters to intermediary.
				
				The intermediary calculates $\vw_n$ and $\vtheta_n$ by averaging			\begin{equation}
				\label{eq:algo_2}
				\vw_n\overset{\Delta}{=}\sum_{j=1}^{B} p_j  {\vw}_{n}^j, \quad
				 \vtheta_n\overset{\Delta}{=}\sum_{j=1}^{B} p_j {\vtheta}_{n}^j
				\end{equation}
				The intermediary send back $\vw_n, \vtheta_n$ and agents update local parameters
				\begin{equation}
				\vw_n^i =\vw_n, \quad
				\vtheta_n^i = \vtheta_n
				\end{equation}
			}
		}
		\caption{Federated Generative Adversarial Network (FedGAN)}
		\label{alg: fedgan}
	\end{algorithm}

	\subsection{Communication and Computation Complexity}
	\label{sec:communi_comput}
FedGAN communications are limited to sending parameters to intermediary by all agents and receiving back the synchronized parameters every $K$ steps. For a parameter vector of size $M$ (we assume the size of generator and discriminator are the same order), the average communication per round per agent is $\frac{2\times 2M}{K}$. Increasing $K$ reduces the average communication, which may reduce the performance of trained FedGAN (we experiment FedGAN robustness to increasing $K$ in Section \ref{sec:exp_img} and leave its theoretical understanding for future research). For a general distributed GAN where the generator is trained at the intermediary, the communication involves sending discriminator parameters and generator parameters (or the fake generated data), and this communication should happen at every time step for convergence. The average communication per round per agent therefore is $2\times 2M$. This shows the communication efficiency of FedGAN.

Since each agent trains a local generator, FedGAN requires increased computations for agents compared to distributed GAN, but at the same order (roughly doubled). However, in FedGAN the intermediary has significantly lower computational burden since it only average the agents' parameters.

	\subsection{Convergence Analysis}
	\label{sec:conv_analy}
	
	In this section, we show that FedGAN converges even with non-iid sources of data, under certain standard assumptions. We analyze the convergence of  FedGAN for both equal time-scale updates and two time-scale updates (distinguished by whether $a(n)=b(n)$). While using equal time-scale update is considered standard, some recent progress in GANs such as Self-Attention GAN \citep{zhang2018self}  advocate the use of two time-scale updates presented in \citep{heusel2017gans} for tuning hyper-parameters.
	
	We extend the notations in this section. For a centralized GAN that pools all the distributed data together, we denote the generator's and discriminator's loss functions by $\mathcal{L}_G$ and $\mathcal{L}_D$,  with true gradients $\vh(\vtheta, \vw):=\nabla _\vtheta \mathcal{L}_G$ and $\vg(\vtheta, \vw)=\nabla _\vw \mathcal{L}_D$. Also define ${\mM}^{(\vtheta)}:=\vh(\vtheta, \vw)-\sum_{i} p_i\tilde{\vh}^i(\vtheta, \vw)$ and ${\mM}^{(\vw)}:= \vg(\vtheta, \vw)-\sum_{i} p_i\tilde{\vg}^i(\vtheta, \vw)$.
	${\mM}^{(\vtheta)}$ and ${\mM}^{(\vw)}$
	are random variables due randomness in mini-batch stochastic gradient of $\tilde{\vh}^i,\tilde{\vg}^i$.
	
	We make the following standard assumptions for rest of this section. The first four are with respect to the centralized GAN and are often used in stochastic approximation literature of GAN convergence. The last assumption is with respect to local GANs and is common in distributed learning.
	 
	\begin{itemize}
		\item [(\textbf{A1})] $\vh^i$  and $\vg^i$ are $L$-Lipschitz.
		\item [(\textbf{A2})] $\sum_n a(n)=\infty$, $\sum_n a^2(n)<\infty$, $\sum_n b(n)=\infty$, $\sum_n b^2(n)<\infty$ 
		\item [(\textbf{A3})] The stochastic gradient errors $\{{\mM}^{(\vtheta)}_{n}\} $ and $\{ {\mM}_{n}^{(\vw)}\}$ are martingale difference sequences w.r.t. the increasing $\sigma$-filed $\mathcal{F}_n=\sigma(\vtheta_l, \vw_l, {\mM}^{(\vtheta)}_{l}, {\mM}_{l}^{(\vw)}, l\leq n),n\geq 0$.  
		\item [(\textbf{A4})] $\sup_n||{\vtheta_n}|| < \infty$ and $\sup_n||{\vw_n}|| < \infty$.
		\item [(\textbf{A5})] $\E||[\tilde{\vg}^i(\vtheta, \vw)]-\vg^i(\vtheta, \vw)||\leq \sigma_g$, $\E||[\tilde{\vh}^i(\vtheta, \vw)]-\vh^i(\vtheta, \vw)||\leq \sigma_h$ (bounded variance) and  $||\vg^i(\vtheta, \vw)-\vg(\vtheta, \vw)||\leq \mu_{g}$ (bounded gradient divergence).
	\end{itemize}
	\textbf{(A1)}-\textbf{(A4)} are clear assumptions. In \textbf{(A5)}, the first bound ensure the closeness between the local stochastic gradients and local true gradients, while the second bound ensures closeness of local discriminator true gradient of non-iid sources and the discriminator true gradient of the pooled data.
	
	We next prove the convergence of FedGAN. To this end, we rely on the  extensive literature that connects the convergence of GAN to the convergence of an ODE representation of the parameter updates \citep{mescheder2017numerics}. We prove the ODE representing the parameter updates of FedGAN asymptotically tracks the ODE representing the parameter update of the centralized GAN. We then use the existing results on convergence of centralized GAN ODE \citep{mescheder2018training, nagarajan2017gradient}. Note that our results do not mean the FedGAN and centralized GAN converge to the same point in general. 

	\textbf{Equal time-scale update}. It has been shown in \citep{nagarajan2017gradient, mescheder2017numerics} that under equal time-scale update, the centralized GAN tracks the following ODE asymptotically (we use $t$ to denote continuous time). 
	\begin{equation}
	\label{eq: ode_z}
	\begin{pmatrix}\dot{{\vw}}(t)\\ \dot{{\vtheta}}(t)\end{pmatrix} = 
	\begin{pmatrix}{\vg}({\vtheta}(t),{\vw}(t))\\ {\vh}({\vtheta}(t),{\vw}(t))\end{pmatrix}.
	\end{equation}
	
    We now show that when $a(t)=b(t)$, the updates  in (\ref{eq:algo_update}) as specified by the proposed algorithm, also tracks the ODE in (\ref{eq: ode_z}) asymptotically. To this end, we further extend the notations. For centralized GAN, Let $\vz(t):=(\vw(t),\vtheta(t))^\top$ and define $\vq(\vz(t)):=\dot{\vz}(t)=(\vg(\vtheta(t)), \vh(\vw(t)))^\top$ from (\ref{eq: ode_z}). For FedGAN, define $\vz_n:=(\vw_n,\vtheta_n)^\top$ from (\ref{eq:algo_2}) and with a little abuse of notation, define time instants $t(0)=0$, $t(n)=\sum_{m=0}^{n-1} a(m)$. Define a continuous piece-wise linear function
	$\bar{\vz}(t)$ by 
	\begin{equation}\label{eq: z bar}
	\bar{\vz}(t(n)):={\vz}_{n},
	\end{equation}
	with linear interpolation on each interval $[t(n), t(n+1)]$. Correspondingly, define $\bar{\vw}(t(n))$ and $\bar{\vtheta}(t(n))$ to have  $\bar{\vz}(t(n))=(\bar{\vw}(t(n)),\bar{\vtheta}(t(n)))^\top$.

	Let $\vz^s(t)$ (correspondingly $\vz_s(t)$) denote the unique solution to (\ref{eq: ode_z}) starting (ending) at $s$
	\begin{equation}
	\label{eq:ode_z_s}
	\dot{{\vz^s}}(t) = {\vq}({\vz^s}(t)), t\geq s \quad (t\leq s)
	\end{equation}
	with $\vz^s(s)=\bar{\vz}(s)$ (with $\vz_s(s)=\bar{\vz}(s)$).	Define $(\vw^s(t), \vtheta^s(t)$ (correspondingly  $(\vw_s(t), \vtheta_s(t)$)  to be the elements of  $\vz^s(t)$ (elements of $\vz_s(t)$).
	
	Now, in order to prove FedGAN follows (\ref{eq: ode_z}) asymptotically, it is sufficient to show that $\bar{\vz}(t)$ asymptotically tracks $\vz^s(t)$ and $\vz_s(t)$ as $s\to \infty$. The first step is to show the difference between the intermediary averaged parameters ${\vw}_{n},{\vtheta}_{n}$ and $\vv_n,\vphi_n$ defined below base on the centralized GAN updates in between synchronization intervals is bounded. If $n=\ell K$, then let $\vv_{n}=\vw_{n}$ and $\vphi_{n}=\vtheta_{n}$ otherwise, denote $n_1$ to be the largest multiplication of $K$
 before $n$ and let
    \begin{align}
	  \vv_n = \vw_{n_1} + \sum_{k=n_1}^n a(k) \vg(\vphi_k, \vv_k), \quad
	  \vphi_n = \vtheta_{n_1} + \sum_{k=n_1}^n b(k) \vh(\vphi_k, \vv_k).
	\end{align}
	We prove the following \lemref{lem: bound_var} and \lemref{lem: bound_grad_div} for this purpose. The distinction between proof here and in Theorem 1 in \citep{wang2019adaptive} is that we consider local SGD for federated learning. 
	
	We present the proofs of result in this Section in Appendix \ref{app: proof}.
	
	\begin{lemma}
		\label{lem: bound_var}
		$\E ||\vw^i_n-\vv_n||+\E||\vtheta^i_n-\vphi_n||\leq r_1(n):= \frac{\sigma_g+\mu_g+\sigma_h}{2L}[(1+2a(n-1)L)^{n\, \text{mod}\, K}-1]$.  
	\end{lemma}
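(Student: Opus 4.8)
The plan is to track, within a single synchronization window, how far each agent's local trajectory $(\vw_n^i,\vtheta_n^i)$ produced by the stochastic updates \eqref{eq:algo_update} drifts from the reference trajectory $(\vv_n,\vphi_n)$ driven by the \emph{centralized, noiseless} gradients $\vg,\vh$, and to close a scalar Grönwall-type recursion for the combined gap $C_n := \E\|\vw_n^i-\vv_n\|+\E\|\vtheta_n^i-\vphi_n\|$. The base case is immediate: letting $n_1$ denote the last synchronization time before $n$, the averaging step \eqref{eq:algo_2} forces $\vw_{n_1}^i=\vw_{n_1}=\vv_{n_1}$ and $\vtheta_{n_1}^i=\vphi_{n_1}$, so $C_{n_1}=0$, and $C$ restarts from zero at every multiple of $K$, which is what produces the exponent $n\bmod K$.

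First I would subtract the one-step reference update $\vv_{k+1}=\vv_k+a(k)\vg(\vphi_k,\vv_k)$ from the local update $\vw_{k+1}^i=\vw_k^i+a(k)\tilde{\vg}^i(\vtheta_k^i,\vw_k^i)$ and split the resulting gradient difference into three telescoping pieces: $\tilde{\vg}^i(\vtheta_k^i,\vw_k^i)-\vg^i(\vtheta_k^i,\vw_k^i)$ (stochastic error), $\vg^i(\vtheta_k^i,\vw_k^i)-\vg(\vtheta_k^i,\vw_k^i)$ (non-iid divergence), and $\vg(\vtheta_k^i,\vw_k^i)-\vg(\vphi_k,\vv_k)$ (evaluation at different points). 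Taking $\E\|\cdot\|$ and applying the triangle inequality bounds the first piece by $\sigma_g$ and the second by $\mu_g$ via \textbf{(A5)}, while the third is bounded by $L(\E\|\vtheta_k^i-\vphi_k\|+\E\|\vw_k^i-\vv_k\|)$ using that $\vg=\sum_i p_i\vg^i$ (forced by the martingale definition of $\mM^{(\vw)}$ in \textbf{(A3)}) inherits $L$-Lipschitzness from \textbf{(A1)} together with $\|(\vtheta,\vw)\|\le\|\vtheta\|+\|\vw\|$. The same decomposition applies to the generator update; here the divergence term $\vh^i-\vh$ vanishes because the generator gradient is computed from fake samples rather than local real data, which is exactly why \textbf{(A5)} posits a divergence bound only for $\vg$, so the generator inequality carries $\sigma_h$ but no $\mu$ term.

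Adding the discriminator and generator inequalities (using $a(k)=b(k)$ in the equal time-scale regime) collapses everything into the single recursion $C_{k+1}\le (1+2La(k))C_k+a(k)(\sigma_g+\mu_g+\sigma_h)$. Treating the step size as constant equal to $a(n-1)$ across the short window $[n_1,n)$ and unrolling this geometric recursion from $C_{n_1}=0$ gives $C_n\le \frac{\sigma_g+\mu_g+\sigma_h}{2L}\big[(1+2La(n-1))^{\,n-n_1}-1\big]$, and since $n-n_1=n\bmod K$ this is exactly $r_1(n)$.

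I expect the main obstacle to be bookkeeping around the step size rather than any deep difficulty: the Robbins--Monro schedule \textbf{(A2)} makes $a(k)$ decreasing, so replacing each $a(k)$ in the window by $a(n-1)$ is a per-round constant-learning-rate convention rather than a genuine upper bound, and a fully rigorous version would either fix the step size within each synchronization round or retain the product $\prod_l(1+2La(l))$ and control it via $1+x\le e^x$. The only other point needing care is justifying the joint-argument Lipschitz estimate $\E\|\vg(\vtheta_k^i,\vw_k^i)-\vg(\vphi_k,\vv_k)\|\le L\,\E[\|\vtheta_k^i-\vphi_k\|+\|\vw_k^i-\vv_k\|]$ and confirming that the unbiasedness implied by \textbf{(A3)} is consistent with the first-moment bound used from \textbf{(A5)}.
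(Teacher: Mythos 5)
Your proposal is correct and follows essentially the same route as the paper's proof: the identical three-term telescoping of the gradient error into stochastic noise ($\sigma_g,\sigma_h$), non-iid divergence ($\mu_g$, absent for the generator because fake data depends only on generator parameters), and a Lipschitz term in the combined gap, followed by the recursion $C_n \le (1+2a(n-1)L)C_{n-1} + a(n-1)(\sigma_g+\mu_g+\sigma_h)$ unrolled from $C_{n_1}=0$. The step-size caveat you raise is exactly the one the paper acknowledges (it invokes ``a mild assumption that the learning rate is unchanged within the same synchronization interval''), so your bookkeeping concern is shared by, not resolved differently from, the original.
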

	
	\begin{lemma}
		\label{lem: bound_grad_div}
		$\E||{\vw}_{n}-\vv_{n}||+\E||{\vtheta}_{n}-\vphi_{n}||\leq r_2(n):=\frac{(\sigma_g+\sigma_h+\mu_g)}{2L}[(1+2a(n-1)L)^{K}-1]-a(n-1)\mu_gK $.
	\end{lemma}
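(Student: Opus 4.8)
The plan is to derive a one-step recursion for the averaged deviation $D_n := \E\|\vw_n-\vv_n\|+\E\|\vtheta_n-\vphi_n\|$ inside a single synchronization window, telescope it from the last sync index $n_1$ (where $\vw_{n_1}=\vv_{n_1}$ and $\vtheta_{n_1}=\vphi_{n_1}$, so $D_{n_1}=0$), and feed in the per-agent bound $r_1$ from \lemref{lem: bound_var}. Here $\vw_n:=\sum_i p_i\vw_n^i$ is the virtual average (it coincides with the broadcast value at sync steps). I abbreviate $a:=a(n-1)$ and treat the step size as constant over one window, since it varies negligibly over $K$ steps (in the equal time-scale case $b=a$). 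I use two structural facts: the generator loss never sees an agent's local data, so $\vh^i=\vh$; and the pooled discriminator loss is the $p_i$-weighted average of the local ones, so $\vg=\sum_i p_i\vg^i$ as functions.

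First I set up the recursion. Averaging the local update (\ref{eq:algo_update}) with weights $p_i$ and using $\vw_n=\sum_i p_i\vw_n^i$ gives
\[
\vw_n-\vv_n=(\vw_{n-1}-\vv_{n-1})+a\Big[\textstyle\sum_i p_i\tilde{\vg}^i(\vtheta^i_{n-1},\vw^i_{n-1})-\vg(\vphi_{n-1},\vv_{n-1})\Big],
\]
and the analogous identity for $\vtheta_n-\vphi_n$ with $\vh$. It therefore suffices to bound the expected norm of each bracketed gradient difference.

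The key step is the discriminator bracket. I peel off the stochastic error $\sum_i p_i[\tilde{\vg}^i-\vg^i]$, whose expected norm is at most $\sigma_g$ by \textbf{(A5)}. In the remaining true-gradient part I insert the common-point gradients $\vg^i(\vphi_{n-1},\vv_{n-1})$; since $\sum_i p_i\vg^i(\vphi_{n-1},\vv_{n-1})=\vg(\vphi_{n-1},\vv_{n-1})$, this cancels the centralized gradient exactly and leaves $\sum_i p_i[\vg^i(\vtheta^i_{n-1},\vw^i_{n-1})-\vg^i(\vphi_{n-1},\vv_{n-1})]$, which by $L$-Lipschitzness \textbf{(A1)} and \lemref{lem: bound_var} (summed over $i$ with $\sum_i p_i=1$) has expected norm at most $L\,r_1(n-1)$. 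Thus averaging replaces the crude per-step divergence penalty $\mu_g$ by the smaller $L\,r_1(n-1)$; this is exactly what sharpens the trivial estimate $D_n\le r_1(n)$ into $r_2$, and is the main point to get right. The generator bracket is easier: with $\vh^i=\vh$ there is no divergence, and its expected norm is at most $\sigma_h+L\,r_1(n-1)$.

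Combining the brackets yields $D_n\le D_{n-1}+a\big[\sigma_g+\sigma_h+2L\,r_1(n-1)\big]$. Telescoping from $n_1$ with $j:=n\bmod K$ and substituting $r_1(n_1+l)=\frac{\sigma_g+\mu_g+\sigma_h}{2L}\big[(1+2aL)^l-1\big]$ for $l=0,\dots,j-1$, the sum separates into a linear part $a(\sigma_g+\sigma_h)j$ and a geometric part $\frac{\sigma_g+\mu_g+\sigma_h}{2L}\big[(1+2aL)^j-1\big]-a(\sigma_g+\mu_g+\sigma_h)j$; since $(\sigma_g+\mu_g+\sigma_h)-(\sigma_g+\sigma_h)=\mu_g$, the two linear pieces recombine into $-a\mu_g j$, giving
\[
D_n\le \frac{\sigma_g+\sigma_h+\mu_g}{2L}\big[(1+2aL)^j-1\big]-a\mu_g\,j.
\]
Finally this right-hand side is increasing in $j$ (its increment $a(\sigma_g+\mu_g+\sigma_h)(1+2aL)^j-a\mu_g\ge a(\sigma_g+\sigma_h)\ge 0$), so it is maximized at $j=K$, which is precisely $r_2(n)$. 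The main obstacle is the cancellation in the third paragraph — seeing that $\vg=\sum_i p_i\vg^i$ lets the divergence be absorbed into a Lipschitz term controlled by $r_1$ — together with the bookkeeping that converts the $\mu_g$ inherited from $r_1$ into the subtractive $-a\mu_g j$ term and the monotonicity argument that lets one pass to the exponent $K$.
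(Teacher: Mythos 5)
Your proof is correct and follows essentially the same route as the paper's: the same one-step recursion for the averaged deviation, the same decomposition of the gradient bracket into stochastic noise ($\sigma_g,\sigma_h$) plus a Lipschitz term controlled by \lemref{lem: bound_var}, and the same telescoping with the geometric sum that recombines the linear pieces into the $-a(n-1)\mu_g K$ term. You are in fact more explicit than the paper on two points it glosses over --- the cancellation $\vg=\sum_i p_i\vg^i$ that eliminates the per-step $\mu_g$ penalty, and the monotonicity in $j=n\bmod K$ that justifies passing to the exponent $K$ --- but these are refinements of the same argument, not a different one.
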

	
	Next, using  \lemref{lem: bound_var} and \lemref{lem: bound_grad_div}, we prove \thmref{thm: equal_traj} which shows $\bar{\vz}(t)$ asymptotically tracks $\vz^s(t)$ and $\vz_s(t)$ as $s\to \infty$. This in turn proves (\ref{eq:algo_update}) asymptotically tracks the  limiting ODE in (\ref{eq: ode_z}). The proof is modified from Lemma 1 in Chapter 2 of \citep{borkar2009stochastic} from centralized GAN to FedGAN.
	\begin{theorem}
		\label{thm: equal_traj}
		For any $T>0$ (a.s. stands for almost surly convergence)
		\begin{equation} 
		\lim_{s\rightarrow \infty} \sup_{t\in [s,s+T]} ||\bar{\vz}(t)-\vz^s(t)|| = 0, a.s.,\quad
		\lim_{s\rightarrow \infty} \sup_{t\in [s-T,s]} ||\bar{\vz}(t)-\vz_s(t)|| = 0, a.s. 
		\end{equation}
	\end{theorem}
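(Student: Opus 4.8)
The plan is to adapt the ODE-tracking argument of Lemma~1, Chapter~2 of \citep{borkar2009stochastic} from a centralized stochastic-approximation iterate to the averaged FedGAN iterate, using \lemref{lem: bound_var} and \lemref{lem: bound_grad_div} to absorb the extra error produced by local SGD on non-iid sources. First I would rewrite the averaged updates as a single recursion. Averaging the local steps (\ref{eq:algo_update}) across agents with weights $p_i$, writing $\vz_n=(\vw_n,\vtheta_n)^\top$, imposing the equal-time-scale assumption $a=b$, and using that the pooled loss is the $p_i$-weighted average of the local losses (so $\vg=\sum_i p_i\vg^i$ and $\vh=\sum_i p_i\vh^i$), one obtains a stochastic-approximation recursion
\begin{equation*}
\vz_n=\vz_{n-1}+a(n-1)\big(\vq(\vz_{n-1})+\vb_{n-1}+\mM_n\big),
\end{equation*}
where $\mM_n:=\sum_i p_i\big[(\tilde{\vg}^i,\tilde{\vh}^i)-(\vg^i,\vh^i)\big](\vtheta^i_{n-1},\vw^i_{n-1})$ is a martingale difference w.r.t. $\mathcal{F}_{n-1}$ by \textbf{(A3)}, and $\vb_{n-1}:=\sum_i p_i\big[(\vg^i,\vh^i)(\vtheta^i_{n-1},\vw^i_{n-1})-(\vg^i,\vh^i)(\vtheta_{n-1},\vw_{n-1})\big]$ is the bias created because each stochastic gradient is evaluated at the drifted local parameters rather than at their average.

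Next I would bound the bias. Since $\vg^i,\vh^i$ are $L$-Lipschitz by \textbf{(A1)}, $\|\vb_{n-1}\|\le L\sum_i p_i(\|\vtheta^i_{n-1}-\vtheta_{n-1}\|+\|\vw^i_{n-1}-\vw_{n-1}\|)$. Inserting the pivot sequence $(\vv_n,\vphi_n)$ and the triangle inequality splits each summand into a local-vs-pivot part, controlled in expectation by \lemref{lem: bound_var}, and a pivot-vs-average part, controlled by \lemref{lem: bound_grad_div}; hence $\E\|\vb_{n-1}\|\le L\,(r_1(n-1)+r_2(n-1))$. Because $a(n)\to 0$ (a consequence of \textbf{(A2)}) and $n\bmod K\le K$, a short computation shows $(1+2a(n)L)^K-1=O(a(n))$, so both $r_1$ and $r_2$ are $O(a(n))$ and $\sum_n a(n)\,\E\|\vb_{n-1}\|=O\big(\sum_n a(n)^2\big)<\infty$. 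By monotone convergence the nonnegative series $\sum_n a(n)\|\vb_{n-1}\|$ is then finite almost surely, so the bias accumulated over any window $[s,s+T]$ vanishes a.s. as $s\to\infty$. This is the step that upgrades the in-expectation Lemmas~\ref{lem: bound_var}--\ref{lem: bound_grad_div} to the almost-sure control the theorem requires.

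Finally I would run the interpolation argument on the time scale $t(n)=\sum_{m<n}a(m)$. Writing the integral form $\bar{\vz}(t(n))=\bar{\vz}(s)+\int_s^{t(n)}\vq(\bar{\vz}(\tau))\,d\tau+(\text{noise})+(\text{bias})+(\text{discretization})$ for $t(n)\in[s,s+T]$, the three error blocks are handled separately: the accumulated noise $\zeta_n:=\sum_{m<n}a(m)\mM_{m+1}$ converges a.s. by the martingale convergence theorem (using \textbf{(A2)}, \textbf{(A3)} and the boundedness \textbf{(A4)}, which keeps the conditional second moments finite), so its increments over $[s,s+T]$ vanish a.s.; the bias block vanishes a.s. by the previous paragraph; and the discretization error, from replacing the piecewise-constant drift $\vq(\vz_k)$ by $\vq(\bar{\vz}(\tau))$, is $O(\sup_{t(k)\in[s,s+T]}a(k))\to 0$ using the Lipschitzness of $\vq$ inherited from \textbf{(A1)} together with \textbf{(A4)}. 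Since $\vz^s(s)=\bar{\vz}(s)$, subtracting the ODE (\ref{eq:ode_z_s}) and applying Gronwall's inequality with constant $e^{LT}$ closes the forward claim. The backward statement, with $\vz_s(t)$ on $[s-T,s]$, follows from the identical estimates applied to the time-reversed flow.

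The hard part will be the bias term $\vb_{n-1}$: unlike the centralized setting of \citep{borkar2009stochastic}, where the averaged drift is exactly $\vq(\vz_{n-1})$, here it is perturbed both by the intra-block divergence of the local parameters and by the non-iid gradient mismatch captured through $\mu_g$. Reconciling the purely in-expectation bounds of \lemref{lem: bound_var} and \lemref{lem: bound_grad_div} with the almost-sure conclusion of the theorem, via the summability $\sum_n a(n)\,\E\|\vb_{n-1}\|<\infty$, is the crux of the argument; the remaining martingale-noise and discretization estimates are standard once this perturbation is shown to be asymptotically negligible.
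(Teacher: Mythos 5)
Your proposal follows essentially the same route as the paper's proof: both adapt Lemma~1, Chapter~2 of \citep{borkar2009stochastic}, use \lemref{lem: bound_var} and \lemref{lem: bound_grad_div} to control the extra deviation caused by local SGD between synchronizations, bound the discretization error along the interpolated trajectory, and close with the (discrete) Gronwall inequality. If anything, your explicit summability argument ($\sum_n a(n)\,\E\|\vb_{n-1}\| = O(\sum_n a(n)^2) < \infty$, hence a.s. finiteness of the accumulated bias) is more careful than the paper, which simply asserts $\sup_{k\geq 0}\|\delta_{nK,nK+k}\|\xrightarrow{n\uparrow\infty}0$ a.s. from the two in-expectation lemmas without spelling out the upgrade.
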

	\begin{corollary}
	From Theorem \ref{thm: equal_traj} above and Theorem 2 of Section 2.1 in \cite{borkar2009stochastic}, under equal time-scale update, FedGAN tracks the ODE in (\ref{eq: ode_z}) asymptotically.
	\end{corollary}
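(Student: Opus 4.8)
The plan is to treat this corollary as the final gluing step: \thmref{thm: equal_traj} has already supplied the one nontrivial hypothesis, so what remains is to match it against the statement of Theorem 2 of Section 2.1 in \citep{borkar2009stochastic} and read off the conclusion. First I would recall precisely what that theorem asserts. Its hypothesis is exactly that the linearly interpolated trajectory of the iterates is an asymptotic pseudotrajectory of the flow generated by the limiting ODE, i.e. that on every finite forward window $[s,s+T]$ (and backward window $[s-T,s]$) the interpolation converges uniformly to the ODE solution through the current point as $s\to\infty$. Its conclusion is that, almost surely, the iterates converge to a (sample-path dependent) compact connected internally chain transitive invariant set of the ODE.

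My second step is to verify the remaining hypotheses are in place, since \thmref{thm: equal_traj} is precisely the forward/backward uniform-tracking statement just described. The standing requirements of Borkar's framework are supplied directly by our assumptions: \textbf{(A1)} gives the Lipschitz property (hence well-posedness and uniqueness) of the vector field $\vq$ driving \eqref{eq: ode_z}; \textbf{(A2)} gives the stepsize conditions $\sum_n a(n)=\infty$, $\sum_n a^2(n)<\infty$; \textbf{(A3)} makes the stochastic-gradient errors a martingale difference sequence so that the noise averages out along the interpolation; and \textbf{(A4)} guarantees $\sup_n\|\vtheta_n\|,\sup_n\|\vw_n\|<\infty$, confining the trajectory to a compact set so that the limit set is nonempty and the chain-transitivity conclusion applies. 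With all hypotheses met, Theorem 2 of Section 2.1 yields that $\vz_n=(\vw_n,\vtheta_n)^\top$ converges almost surely to an internally chain transitive invariant set of \eqref{eq: ode_z}, which is exactly the meaning of FedGAN tracking the ODE asymptotically.

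The final step is to locate these invariant sets. The internally chain transitive invariant sets of \eqref{eq: ode_z} are characterized by the existing centralized-GAN analyses \citep{mescheder2018training, nagarajan2017gradient}: under their equilibrium conditions the relevant limit sets are the (locally) asymptotically stable equilibria of the game dynamics, and the iterates are thus drawn into these stable sets. I would emphasize that this does \emph{not} assert FedGAN and the centralized GAN converge to the same point; it only says both are governed by the same limiting flow \eqref{eq: ode_z} and hence settle onto its internally chain transitive invariant sets, consistent with the remark following the theorem statement.

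The main obstacle here is not computation but a careful reconciliation of hypotheses. I must confirm that the particular pair of forward- and backward-window limits proved in \thmref{thm: equal_traj} is the asymptotic-pseudotrajectory property in the exact form Borkar's Theorem 2 consumes, rather than a weaker one-sided statement, and that the noise entering through the federated local-SGD averaging, not just a single centralized stochastic gradient, still satisfies the martingale-difference structure \textbf{(A3)} that theorem relies on. Once these are confirmed, the corollary follows with essentially no further work.
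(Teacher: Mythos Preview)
Your proposal is correct and matches the paper's approach exactly: the paper does not supply a separate proof for this corollary, treating it as an immediate consequence of \thmref{thm: equal_traj} together with Theorem 2 of Section 2.1 in \citep{borkar2009stochastic}. Your careful check that \textbf{(A1)}--\textbf{(A4)} furnish the standing hypotheses of Borkar's framework, and that the two-sided tracking in \thmref{thm: equal_traj} is precisely the asymptotic-pseudotrajectory input that theorem consumes, is the right way to flesh out what the paper leaves implicit.
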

	
We provide the convergence analysis of FedGAN with two time-scale updates in Appendix \ref{app: two_time}.

\section{Experiments}
In this section we experiment the proposed FedGAN algorithm using different datasets to show its convergence, performance in generating close to real data, and robustness to reducing communications (by increasing synchronization interval $K$). First in Section \ref{sec:toy} we experiment with popular toy examples in the GAN literature, 2D system \citep{nagarajan2017gradient}, mixed Gaussian \citep{Metz2016UnrolledGA} and Swiss Roll \citep{gulrajani2017improved}. Next, in Section \ref{sec:exp_img} we experiment with image datasets including MNIST, CFAR-10 and CelebA. Finally we consider time-series data in Section \ref{sec:exp_energy} by focusing on energy industry, including PG\&E household electricity consumption and electric vehicles charging sessions from a charging station company. In all the experiments, data sources are partitioned into non-iid subsets each owned by one agent.

	\subsection{Toy Examples}
	\label{sec:toy}
	
	Three toy examples, 2D system mixed Gaussian and Swiss role, are presented in  Appendix \ref{app: toy}. These experiments show the convergence and performance of the FedGAN in generating data similar to real data. The first experiment, 2D system, also shows the robustness of FedGAN performance to increasing synchronization intervals $K$ for reducing communications.

	\subsection{Image Datasets}
	\label{sec:exp_img}
	We test  FedGAN on MNIST, CIFAR-10, and CelebA to show its performance on image datasets. 
	
	Both MNIST and CIFAR-10 consist of $10$ classes of data which we split across $B=5$ agents, each with two classes of  images. We use the ACGAN neural network structure in \citep{odena2017conditional}. For a detailed list of architecture and hyperparameters and other generated images see  Appendix \ref{app: net_hyper}.
	
For the MNIST dataset, we set synchronization interval $K=20$. Figure \ref{fig:mnist} presents the generated images of MNIST from FedGAN. It shows FedGAN can generate close to real images.

\begin{figure}[htbp]
		\begin{subfigure}[b]{0.5\textwidth}
			\centerline{\includegraphics[scale=0.4]{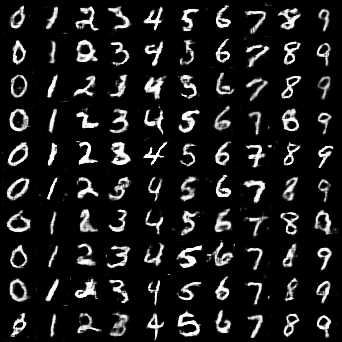}}
			\caption{Generated MNIST images from FedGAN with\\ $B=5$ agents and synchronization interval $K=20$.}
		    \label{fig:mnist}
		\end{subfigure}
		\begin{subfigure}[b]{0.5\textwidth}
			\centerline{\includegraphics[scale=0.5]{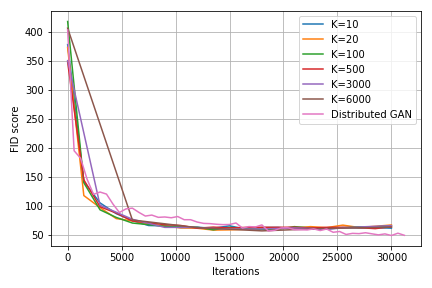}}
			\caption{FID score  on CIFAR-10 with $B=5$ and $K=10, 20, 100, 500, 3000, 6000$ and distributed GAN.}
			\label{cifar10_fid}
		\end{subfigure}
		\caption{Experiment results for FedGAN on MNIST and CIFAR-10.}
	\end{figure}
	
	For the CIFAR-10 dataset, we use the FID scores \citep{karmakar2017two} to compare the generated and real data and show the realness of the generated images from FedGAN. We check FedGAN performance robustness to reduced communications and increased synchronization intervals $K$ by setting $K=10, 20, 1000, 500, 3000, 6000$. We benchmark FedGAN performance against a typical distributed GAN  similar to that in \citep{augenstein2019generative}, where there are local discriminators that send the parameters to the intermediary after each update, and the intermediary sends back the average discriminator parameters plus the generated data of its updated centralized generator (to the best of our knowledge, there is no other algorithm for communication efficient distributed GAN to compare ours). 	Figure  \ref{cifar10_fid} shows the results for FedGAN CIFAR-10. It can be observed that, even for large synchronization interval $K$, the FedGAN FID score is close to distributed GAN (except for the tail part). This indicates that FedGAN has high performance for image data, and furthermore its performance is robust to reducing the communications by increasing synchronization intervals $K$. The gap between the distributed GAN and FedGAN in the tail part required further investigation in the future. 
	
    Next, we experiment FedGAN algorithm on CelebA \citep{liu2015deep}, a  dataset of $202,599$ face images of celebrities. We split data across $B=5$ agents by first generating $16$ classes based on the combinations of four binary attributes of the images, Eyeglasses, Male, Smiling, and Young, and then allocating each of these $16$ classes to one of the $5$ agents (some classes divided between two agents to ensure equal size of data across agents). We check FedGAN robustness to reduced communications by setting synchronization intervals $K=10, 20, 50, 100, 200$, and also compare the performance with distributed GAN. We use the ACGAN neural network structure in \citep{odena2017conditional} (for details of data splitting, lists of structures and hyperparameters see  Appendix \ref{app: net_hyper}). Figure \ref{fig:celebA} shows the generated images with $K=50$ with $N=16000$ iterations (more generated images are presented in Appendix \ref{app: net_hyper}). Figure \ref{fig:celebA_fid} shows the performance of FedGAN is close to distributed GAN, and robust to reduced communications.
	
	\begin{figure}[htbp]
		\begin{subfigure}[b]{0.45\textwidth}
			\centerline{\includegraphics[scale=0.4]{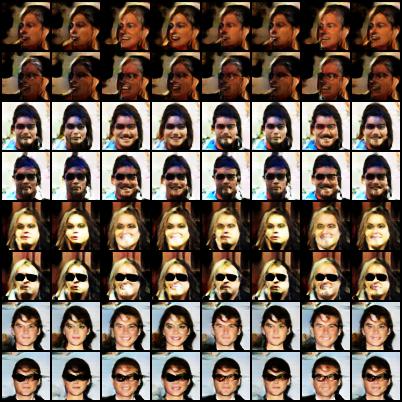}}
			\caption{Generated images with $B=5$, $K=50$, and $N=16000$ iterations.}
		    \label{fig:celebA}
		\end{subfigure}
		\begin{subfigure}[b]{0.45\textwidth}
			\centerline{\includegraphics[scale=0.45]{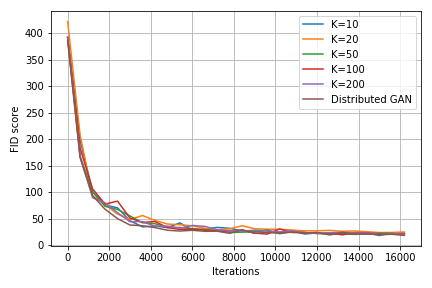}}
			\caption{FID score  on CelebA, $B=5$ and $K=10, 20, 50, 100, 200$ compared to distributed GAN.}
			\label{fig:celebA_fid}
		\end{subfigure}
		\caption{Experiment results for FedGAN on CelebA.}
	\end{figure}


	\subsection{Time Series Data in Energy Industry}
	\label{sec:exp_energy}
    In this section, we experiment FedGAN for time series data. We particularly focus on energy industry where the communication and privacy constraints are important, and data often is in time series format. We experiment on household electricity consumption and electric vehicle (EV) charing sessions data. In order to measure the performance of FedGAN for time series data, absent of an accepted measure or score for measuring the realness of time series data, we cluster both the real data and generated data, and visually compare the top 9 cluster centroids for each.

	For household electricity consumption, we use the hourly load profile data of $500k$ households in one year in California by Pacific Gas and Electric Company (PG\&E) (dividing every single households data into separate daily profiles). The data includes both household characteristics and temporal features including regional average income, enrolled in low income tariff program/not, all electric/not, daily average temperature, weekday/weekend, month, tariff type, climate zone, house premise type. 
	
	For EV data, we use data from an electric vehicle (EV) charging stations company including $12.4$ million charging sessions where each session is defined by the plug-in and -off of an EV at a charging station. For each session, we observe  start time,  end time, 15-min charging power, charging time, and charges energy, and fully charged or not. We also observe characteristics of the charging station as well as the EV. For example an EV with battery capacity 24kW arriving at a high-tech workplace at 9:00am on Monday (see Appedix \ref{app: data} for full detail).
	
	For both experiments, we split the data in equal parts across $B=5$ agents (representing different utility companies or different EV charging companies), based on climate zones or category of charging stations (to ensure non-iid data across agents), and set synchronization interval $K=20$. We use a network structure similar to CGAN \citep{mirza2014conditional}.

    We separate $10\%$ of each agent's data, train FedGAN on the rest of the data, and use the trained FedGAN to generate fake time series profiles for those $10\%$. We then apply k-means for both the real and generated data of those $10\%$. The k-means top 9 centroids are shown in Figure \ref{cluster_real} and  \ref{cluster_test} for household electricity consumption, and in Figure \ref{cluster_ev} for EV charging sessions. Visually comparing them shows the performance of FedGAN on generating close to real profiles for time series data.

	\begin{figure}[ht]
		\begin{subfigure}[b]{0.5\textwidth}
			\centerline{\includegraphics[scale=0.45]{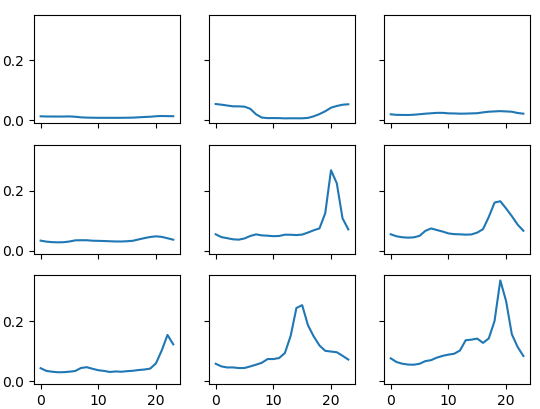}}
			\caption{Top 9 cluster centroids of the real data}
			\label{cluster_real}
		\end{subfigure}
		\begin{subfigure}[b]{0.5\textwidth}
			\centerline{\includegraphics[scale=0.45]{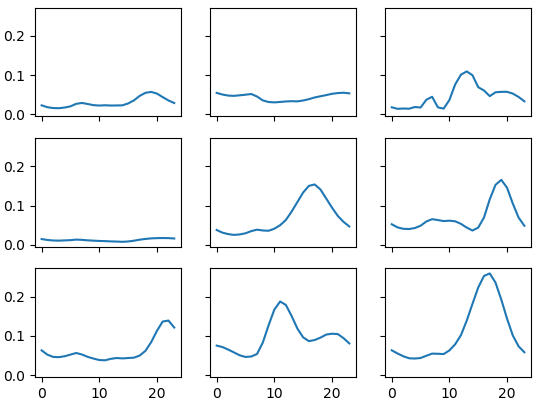}}
			\caption{Top 9 cluster centroids of the generated data}
	\label{cluster_test}
		\end{subfigure}
		\caption{Top 9 k-means clusters for real PG\&E daily household electricity consumption, and FedGAN generated profiles with $B=5$  and $K=20$. The consumption profiles are normalized.}
		\label{cluster}
	\end{figure}

	\begin{figure}[htbp]
		\begin{subfigure}[b]{0.5\textwidth}
			\centerline{\includegraphics[scale=0.45]{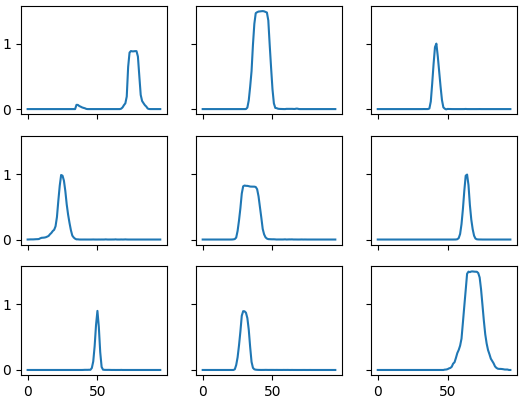}}
			\caption{Top 9 cluster centroids of the real data}
			\label{cluster_ev_real}
		\end{subfigure}
		\begin{subfigure}[b]{0.5\textwidth}
			\centerline{\includegraphics[scale=0.45]{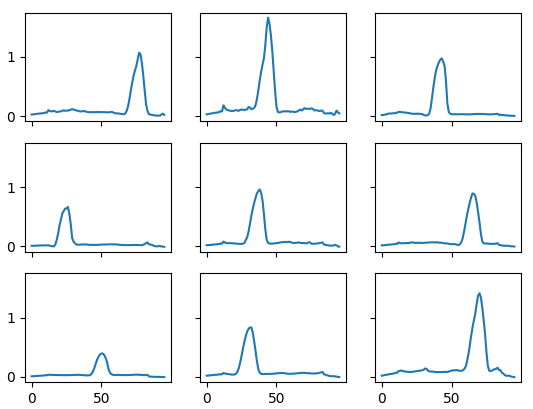}}
			\caption{Top 9 cluster centroids of the generated data}
			\label{cluster_ev_test}
		\end{subfigure}
		\caption{Top 9 k-means clusters for real EV charging profiles, and FedGAN generated profiles with $B=5$  and $K=20$. The charging profiles are normalized.}
		\label{cluster_ev}
	\end{figure}

\section{Conclusions and Future Directions}
We proposed an algorithm for communication-efficient distributed GAN subject to privacy constraints (FedGAN). We proved the convergence of our FedGAN for non-iid sources of data both with equal and two time-scale updates of generators and discriminators. We experimented FedGAN on toy examples (2D,  mixed Gaussian, and Swiss roll), image data (MNIST, CIFAR-10, and CelebA), and time series data (household electricity consumption and EV charging sessions) to study its convergence and performance. We showed FedGAN has similar performance to general distributed GAN, while reduces communication complexity. We also showed its robustness to reduced communications.

There are some observations and open problems. First is experimenting FedGAN with other federated learning datasets such as mobile phone texts, and with other applications besides image classification and energy. Robustness to increasing agents number $N$ is required which requires large set of GPUs (beyond the engineering capacity of this research). Theoretically, an explanation for the FedGAN robustness to reduced communication, as well as identifying the rate of convergence are interesting. While privacy is the main reason agents do not share data in our FedGAN, adding privacy noise to the model parameters, for example by differential privacy, can further preserve privacy and should be studied. Finally, non-responsiveness of some agents in practice should be studied.

\label{sec:concl}

\section{Broader Impact}
This work has the following potential positive impact in the society: it provides an algorithm for shared learning across agents with local data while preserving privacy and low communication cost, hence it helps democratizing data power. We have also emphasized energy domain as an application which is at the forefront of sustainability and reversing global warming; we particularly experimented on data for household demand prediction and electric vehicle charging station planning.

\section*{Acknowledgment}

We would like to thank the Pacific Gas and Electric Company (PG\&E) for providing the household energy consumption dataset and SLAC National Accelerator Laboratory for providing the EV dataset.


\bibliography{example_paper}
\bibliographystyle{plainnat}

\onecolumn
\appendix

\section*{Supplementary Material for "FedGAN: Federated Generative Adversarial Networks for Distributed Data"}

\section {Convergence Analysis of FedGAN with Two Time-Scale Updates} \label{app: two_time}
To study the two time-scale FedGAN we add the following assumptions.
	\begin{itemize}
		\item [(\textbf{A6})]  $b(n)=\text{o}(a(n))$
		\item [(\textbf{A7})]  For each ${\vtheta}$, the ODE $\dot{{\vw}}(t)={\vg}({\vtheta}, {\vw}(t))$ has a  local asymptotically stable attractor ${\lambda}({\vtheta})$ within a domain of attraction $H_\vtheta$ such that ${\vtheta}$ is Lipschitz. The ODE $\dot{{\vtheta}}(t)=\vh({\vtheta}(t),{\lambda}({\vtheta}(t)))$ has a local asymptotically stable equilibrium ${\vtheta^*}$ within a domain of attraction.
	\end{itemize}
	\textbf{(A6)} is the standard assumption to determine the relationship of the generator and discriminator learning rates. \textbf{(A7)} characterizes the local asymptotic behavior of the limiting ODE in (\ref{eq: ode_z}) and shows its local asymptotic stability.
	Both assumptions are regular conditions in the literature of two time-scale stochastic approximation \citep{borkar2009stochastic,karmakar2017two}. 
    In the literature of stochastic approximation, often global asymptotic stability assumptions are made but \citep{karmakar2017two} and Chapter 2 of \citep{borkar2009stochastic} relax them to local asymptotic stability which is a more practical assumption in GANs\footnote{Note that another way of relaxation to local asymptotic stability is to assume that the initial parameter is in the region of attraction for a locally asymptotically stable attractor, which is difficult to ensure in practice.}. The relaxed local stability assumption \textbf{(A7)} limits the convergence results to be conditioned on an unverifiable event i.e. $\{\vw_n\}$ and $\{\vtheta_n\}$ eventually belongs to some compact set of their region of attraction.

	To prove the convergence of FedGAN for two time-scale updates, similar to the proof for equal time-scale update, we show the ODE representation of the FedGAN asymptotically tracks the ODE below representing the parameter update of the two time-scale centralized GAN
	\begin{equation}
	\label{eq:ode_2_w}
	\dot{{\vw}}(t) = \frac{1}{\epsilon}{\vg}(\vtheta(t), {\vw}(t)),\quad
	\dot{{\vtheta}}(t) = {\vh}(\vtheta(t), {\vw}(t))
	\end{equation}
    where $\epsilon \downarrow 0$ to ensure updating ${\vw}(t)$ is fast compared to updating $\vtheta(t)$  \textbf{(A6)}. Consequent to \textbf{(A6)},  $\vtheta(t)$ can be considered quasi-static while analyzing the updates of ${\vw}(t)$ and we can look at the following ODE in studying ${\vw}(t)$ (with small change of notation we drop the notational dependency on fixed $\vtheta$ for convenience) 
	\begin{equation}
	\dot{{\vw}}(t) = {\vg}({\vw}(t)).
	\label{eq: sample_ode}
	\end{equation}
	In order to show that the updates of ${\vw}_n$ asymptotically tracks (\ref{eq: sample_ode}), we follow the same idea as in equal time-scale update to construct the continuous interpolated trajectory $\bar{\vw}(t)$ defined immediately after (\ref{eq: z bar}), and show that it asymptotically almost surely approaches the solution of (\ref{eq: sample_ode}). For this, we also use the construction of $\vw^s(t)$ and $w_s(t), t\leq s$ defined immediately after (\ref{eq:ode_z_s}). We thus have the following lemma as a special case of \thmref{thm: equal_traj}.
	\begin{lemma}
		\label{lem: asym_traj}
		For any $T>0$,
		\begin{equation} 
		\lim_{s\rightarrow \infty} \sup_{t\in [s,s+T]} ||\bar{\vw}(t)-\vw^s(t)|| = 0, a.s.,\quad
		\lim_{s\rightarrow \infty} \sup_{t\in [s-T,s]} ||\bar{\vw}(t)-\vw_s(t)|| = 0, a.s.
		\end{equation}
	\end{lemma}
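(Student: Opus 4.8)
The plan is to obtain \lemref{lem: asym_traj} as a direct transcription of \thmref{thm: equal_traj} to the fast variable $\vw$ alone. Under the two time-scale assumption \textbf{(A6)}, $b(n)=\mathrm{o}(a(n))$, the slow parameter $\vtheta$ evolves negligibly on the fast timescale $t(n)=\sum_{m=0}^{n-1}a(m)$, so the averaged recursion for $\vw_n$ reduces to a single time-scale stochastic approximation whose limiting ODE is exactly (\ref{eq: sample_ode}), $\dot{\vw}(t)=\vg(\vw(t))$. Because the interpolation $\bar{\vw}(t)$ and the forward/backward solutions $\vw^s(t),\vw_s(t)$ are defined precisely as the $\vw$-components of the objects used in the equal time-scale analysis, it suffices to rerun the argument of \thmref{thm: equal_traj} verbatim with $\vz$ replaced by $\vw$ and the drift $\vq$ by $\vg$.

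First I would reuse the federated-averaging control already in place: \lemref{lem: bound_var} and \lemref{lem: bound_grad_div} each bound a sum that contains the $\vw$-deviations $\E\|\vw_n^i-\vv_n\|$ and $\E\|\vw_n-\vv_n\|$, so the $\vw$-part of these estimates carries over unchanged and quantifies how far the averaged discriminator parameters drift from the between-synchronization centralized trajectory $\vv_n$. Together with the Lipschitz property \textbf{(A1)}, the step-size conditions \textbf{(A2)}, the martingale-difference structure \textbf{(A3)}, and the boundedness \textbf{(A4)}, these are exactly the ingredients of the Chapter 2 Lemma 1 estimate of \citep{borkar2009stochastic} that underlies \thmref{thm: equal_traj}. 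I would then run the same discrete Gronwall comparison between $\bar{\vw}$ and $\vw^s$ (and in reverse time for $\vw_s$): the martingale noise vanishes almost surely by square-summability of $a(n)$ in \textbf{(A2)} via an $L^2$-bounded martingale convergence argument, the Euler discretization error is absorbed by Gronwall, and the federated-averaging error contributes the $r_1,r_2$ terms, which tend to zero as $s\to\infty$.

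The main obstacle is justifying the quasi-static reduction, i.e. replacing $\vg(\vtheta(t),\vw(t))$ by $\vg(\vw(t))$ when passing from (\ref{eq:ode_2_w}) to (\ref{eq: sample_ode}). Over a fast-time window $[s,s+T]$ the accumulated $\vtheta$-drift is $\sum_k b(k)\|\tilde{\vh}^i\|=\sum_k (b(k)/a(k))\,a(k)\|\tilde{\vh}^i\|\lesssim \sup_k (b(k)/a(k))\,T$, since $\sum_k a(k)\approx T$ on the window and the gradients are bounded by \textbf{(A1)}--\textbf{(A4)}; this vanishes as $s\to\infty$ by \textbf{(A6)}. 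The $L$-Lipschitz continuity of $\vg$ in $\vtheta$ then bounds $\|\vg(\vtheta(t),\vw(t))-\vg(\vw(t))\|$ by $L$ times this drift, an error that enters the Gronwall estimate additively and disappears in the limit. Once this freezing of $\vtheta$ is established, the remainder is the routine equal time-scale computation, and the backward-trajectory claim for $\vw_s$ follows by the identical argument run in reverse time.
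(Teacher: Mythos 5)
Your proposal is correct and follows essentially the same route as the paper: the paper disposes of this lemma in one line by observing that it is the special case of \thmref{thm: equal_traj} in which the $\vtheta$-component has dimension zero (the lemma is stated for the regime where $\vtheta_n$ is held fixed, as \thmref{thm: single_converge} makes explicit), and your plan to rerun the Gronwall comparison of \thmref{thm: equal_traj} verbatim with $\vz$ replaced by $\vw$ and $\vq$ by $\vg$, reusing the $\vw$-parts of \lemref{lem: bound_var} and \lemref{lem: bound_grad_div}, is exactly that reduction. The quasi-static justification in your final paragraph---bounding the accumulated $\vtheta$-drift over a fast-time window of length $T$ by $\sup_k (b(k)/a(k))\cdot T$ and invoking \textbf{(A6)}---is correct and standard, but it is not part of the paper's proof of this lemma; the paper defers the coupling between the two time scales to \lemref{lem: trans_invar} and \thmref{thm: two_time}, citing the corresponding results in Chapter 6 of \citep{borkar2009stochastic}.
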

	
	\begin{proof}[Proof of \lemref{lem: asym_traj}]
		The proof can be directly obtained by consider the special case of \thmref{thm: equal_traj} where the dimension of $\vtheta$ is zero.
	\end{proof}

	With  \lemref{lem: asym_traj}, \thmref{thm: single_converge} below shows the FedGAN tracks the ODE in (\ref{eq: sample_ode}) asymptotically when ${\vtheta}_{n}$ is fixed. We refer the reader for proof to \citep{borkar2009stochastic}.  
	\begin{theorem}
		\label{thm: single_converge}
		[Theorem 2, Chapter 2, \citep{borkar2009stochastic}]. Almost surely, the sequence $\{\vw_n\}$ generated by  (\ref{eq:algo_update}) when ${\vtheta}_{n}$ is fixed to ${\vtheta}$ converges to a (possibly sample path dependent) compact connected internally chain transitive invariant set of (\ref{eq: sample_ode}). 
	\end{theorem}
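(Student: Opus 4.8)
The plan is to recognize this statement as a textbook instance of the ODE method for stochastic approximation, in which the slow variable $\vtheta$ is frozen and only the fast iteration $\{\vw_n\}$ is studied. The update (\ref{eq:algo_update}) for $\vw_n$ decomposes into the true-gradient drift $\vg(\vw_n)$, a martingale-difference noise term governed by (\textbf{A3}), and the additional discretization error introduced by running local SGD for $K$ steps before synchronizing; the latter is precisely what Lemmas \ref{lem: bound_var} and \ref{lem: bound_grad_div} bound via $r_1(n)$ and $r_2(n)$. The aim is to show that all of these error contributions become asymptotically negligible along the interpolated trajectory, so that $\bar{\vw}(t)$ behaves in the long run like a genuine solution of (\ref{eq: sample_ode}), and then to read off the structure of its limit set.

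First I would use (\textbf{A4}) to guarantee $\sup_n \|\vw_n\| < \infty$, so that the interpolation $\bar{\vw}(t)$ is precompact and eventually confined to a compact set $\gK$ on which $\vg$ is $L$-Lipschitz by (\textbf{A1}). Precompactness is exactly what makes the subsequent limit-set argument valid. Second, and this is the crux, I would invoke \lemref{lem: asym_traj}: for every $T>0$,
\begin{equation}
\lim_{s\rightarrow \infty} \sup_{t\in [s,s+T]} \|\bar{\vw}(t)-\vw^s(t)\| = 0, \quad a.s.,
\end{equation}
where $\vw^s$ is the solution of (\ref{eq: sample_ode}) passing through $\bar{\vw}(s)$. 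This is the defining property of an \emph{asymptotic pseudotrajectory} of the semiflow generated by (\ref{eq: sample_ode}). Since \lemref{lem: asym_traj} is itself the zero-dimensional-$\vtheta$ specialization of \thmref{thm: equal_traj}, the genuinely analytic work — combining the square-summable step sizes (\textbf{A2}), the martingale noise control (\textbf{A3}), and the federated local-SGD error bounds — has already been carried out, and the federated structure is thereby fully absorbed.

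Finally I would appeal to the characterization of limit sets of asymptotic pseudotrajectories (this is exactly Theorem 2, Chapter 2 of \citep{borkar2009stochastic}): the $\omega$-limit set $\bigcap_{t\ge 0}\overline{\{\bar{\vw}(\tau):\tau\ge t\}}$ of a precompact asymptotic pseudotrajectory is a compact, connected, internally chain transitive set that is invariant under the flow of (\ref{eq: sample_ode}). Because $\vw_n=\bar{\vw}(t(n))$ lies on this trajectory and $t(n)=\sum_{m=0}^{n-1}a(m)\to\infty$ by the divergence condition in (\textbf{A2}), the distance from $\vw_n$ to this set tends to zero, which is the asserted convergence. The one real obstacle is the pseudotrajectory estimate itself; once \lemref{lem: asym_traj} supplies it, the remaining passage to the internally chain transitive invariant set is a purely dynamical-systems consequence that is entirely insensitive to the distributed nature of the algorithm.
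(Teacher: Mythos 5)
Your proposal is correct and follows essentially the same route as the paper: the paper likewise treats \lemref{lem: asym_traj} as supplying the asymptotic-pseudotrajectory property for the frozen-$\vtheta$ iteration and then simply defers to Theorem 2, Chapter 2 of \citep{borkar2009stochastic} for the passage to a compact connected internally chain transitive invariant set of (\ref{eq: sample_ode}). You merely make explicit the intermediate steps (precompactness via (\textbf{A4}) and the $\omega$-limit-set characterization) that the paper leaves to the cited reference.
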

The following Lemma \ref{lem: trans_invar} and Theorem \ref{thm: two_time} extend the results of Theorem \ref{thm: single_converge} to the case when both ${\vw}_{n}$ and ${\vtheta}_{n}$ could vary as in (\ref{eq:ode_2_w}). Both proofs should be referred to the respective part of \citep{borkar2009stochastic}.  Lemma \ref{lem: trans_invar} shows that $\vw_n$ asymptotically tracks $\lambda(\vtheta_n)$, and Theorem \ref{thm: two_time} shows the convergence of the proposed FedGAN asymptotically converges to $(\lambda(\vtheta^*), \vtheta^*)$, which is an equilibrium of the ODE in (\ref{eq:ode_2_w})  representing centralized GAN with two time-scale updates. Lemma \ref{lem: trans_invar} is an adaption from Lemma 1 in Chapter 6 of \citep{borkar2009stochastic}.
 	\begin{lemma}
	\label{lem: trans_invar}
		 For the two time-scale updates as specified in (\ref{eq:algo_update}), $(\vw_n,\vtheta_n)\rightarrow \{(\lambda(\vtheta), \vtheta)\}$ almost surely where $\{(\lambda(\vtheta), \vtheta)\}$ is the internally transitive invariant sets of the ODE $\dot{\vw}(t)=g(\vtheta(t),\vw(t))$, $\dot{\vtheta}(t)=0$.
	\end{lemma}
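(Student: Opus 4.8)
The plan is to recast the two time-scale FedGAN recursion as a single stochastic approximation run on the fast (discriminator) time scale $a(n)$, and then invoke the standard two time-scale argument of Lemma~1, Chapter~6 of \citep{borkar2009stochastic}, with the federated-averaging perturbation already absorbed by \lemref{lem: asym_traj}. Concretely, I would view the joint iterate $(\vw_n,\vtheta_n)$ as a stochastic approximation with step size $a(n)$ by rewriting the generator update from (\ref{eq:algo_update}) as
\begin{equation}
\vtheta_{n}=\vtheta_{n-1}+a(n-1)\Big[\tfrac{b(n-1)}{a(n-1)}\,\tilde{\vh}(\vtheta_{n-1},\vw_{n-1})\Big],
\end{equation}
where $\tilde{\vh}$ denotes the agent-averaged stochastic generator gradient. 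By \textbf{(A6)} the factor $b(n)/a(n)\to 0$, so the effective drift of the $\vtheta$-component vanishes on the fast time scale, and the limiting ODE seen by the joint interpolated trajectory is exactly $\dot{\vw}(t)=\vg(\vtheta(t),\vw(t))$, $\dot{\vtheta}(t)=0$, the system named in the statement.

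The key steps, in order, are the following. First I would form the fast-time-scale interpolation $(\bar{\vw}(t),\bar{\vtheta}(t))$ as in (\ref{eq: z bar}) and use \lemref{lem: bound_var} and \lemref{lem: bound_grad_div} to control the discrepancy introduced by the periodic synchronization, so that the federated iterate behaves, up to an asymptotically negligible error, like the centralized local-SGD recursion. Second I would apply \lemref{lem: asym_traj} to conclude that $\bar{\vw}(t)$ tracks the fast flow while $\bar{\vtheta}(t)$ is asymptotically constant on every bounded time window, using \textbf{(A3)}--\textbf{(A5)} to bound the martingale and bias terms. Third I would invoke the single-time-scale convergence engine (Theorem~2, Chapter~2 of \citep{borkar2009stochastic}, already used for \thmref{thm: single_converge}) applied to the joint ODE, yielding that $(\vw_n,\vtheta_n)$ converges almost surely to a compact connected internally chain transitive invariant set of $\dot{\vw}=\vg(\vtheta,\vw)$, $\dot{\vtheta}=0$. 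Finally I would characterize those invariant sets: since $\vtheta$ is frozen under this ODE and, by \textbf{(A7)}, every trajectory of $\dot{\vw}=\vg(\vtheta,\vw)$ started in the domain of attraction $H_\vtheta$ converges to $\lambda(\vtheta)$, the only such invariant sets lying in the relevant region are the graph points $\{(\lambda(\vtheta),\vtheta)\}$, giving $(\vw_n,\vtheta_n)\to\{(\lambda(\vtheta),\vtheta)\}$.

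The hard part will be the last step together with the conditioning it forces: \textbf{(A7)} provides only \emph{local} asymptotic stability, so identifying the chain transitive invariant sets with the graph of $\lambda$ is valid only on the (unverifiable) event that the iterates eventually enter and remain in a compact subset of the region of attraction, exactly the caveat flagged after \textbf{(A7)}. I would therefore state the conclusion conditionally on that event, as in \citep{karmakar2017two} and Chapter~2 of \citep{borkar2009stochastic}, and take care that the slow drift of $\vtheta_n$ does not push the fast variable out of $H_\vtheta$ before the tracking estimate of \lemref{lem: asym_traj} takes effect. The remaining ingredients---martingale convergence from \textbf{(A3)}, bounded iterates from \textbf{(A4)}, and the variance and divergence bounds from \textbf{(A5)}---are routine and feed directly into the Borkar template, so no new estimates beyond \lemref{lem: bound_var}--\lemref{lem: asym_traj} should be needed.
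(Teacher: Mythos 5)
Your proposal is correct and follows essentially the same route as the paper: rewrite the generator update on the fast time scale $a(n)$ so that its drift $\tfrac{b(n)}{a(n)}\vh(\vtheta_n,\vw_n)$ vanishes by \textbf{(A6)}, then feed the joint iterate into the single-time-scale machinery of Theorem~2, Chapter~2 of \citep{borkar2009stochastic} (via \thmref{thm: single_converge} and the tracking estimates of \lemref{lem: bound_var}--\lemref{lem: asym_traj}) to conclude convergence to the chain transitive invariant sets of $\dot{\vw}=\vg(\vtheta,\vw)$, $\dot{\vtheta}=0$. You are in fact more explicit than the paper about the final identification of those sets with the graph of $\lambda$ and about conditioning on the iterates remaining in the domain of attraction under \textbf{(A7)}, but this is a refinement of the same argument, not a different one.
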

	
	\begin{proof}[Proof of \lemref{lem: trans_invar}]

			Rewrite the generator update as 
			
			\begin{equation}
			{\vtheta}_{n+1}= {\vtheta}_{n}+a(n)[\epsilon_n+{\mM}_{n}^{(\vtheta')}]
			\end{equation}
			where $\epsilon_n:=\frac{b(n)}{a(n)}{h}({\vtheta}_{n},{\vw}_{n})$ and ${\mM}_{n}^{(\vtheta')}:=\frac{b(n)}{a(n)}{\mM}_{n}^{(\vtheta)}$ for $n\geq 0$. From the third Extension in Section 2.2 of \citep{borkar2009stochastic}, $\{\vv_n\}$ should converge to an internally chain transitive invariant set of $\dot{\vv(t)}=0$. Considering this, the proof follows directly from \thmref{thm: single_converge}.
	\end{proof}

	\begin{theorem}
	\label{thm: two_time}
		[Theorem 2, Chapter 6, \citep{borkar2009stochastic}] $(\vw_n,\vtheta_n)\rightarrow (\lambda(\vtheta^*), \vtheta^*)$  almost surely.
	\end{theorem}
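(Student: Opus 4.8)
The plan is to complete the standard two time-scale stochastic approximation argument, building on the three results already in hand, thereby verifying the hypotheses of Theorem 2, Chapter 6 of \citep{borkar2009stochastic}. \lemref{lem: trans_invar} has established that on the faster timescale the discriminator iterate tracks the attractor of the slow variable, i.e. $\|\vw_n-\lambda(\vtheta_n)\|\to 0$ almost surely. What remains is to analyze the slow (generator) iterate $\vtheta_n$ and show it converges to $\vtheta^*$; convergence of $\vw_n$ to $\lambda(\vtheta^*)$ then follows immediately from continuity of $\lambda$, which is Lipschitz by \textbf{(A7)}.

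First I would pass to the slower timescale by setting $\tau(0)=0$, $\tau(n)=\sum_{m=0}^{n-1} b(m)$, and constructing the piecewise-linear interpolation $\bar{\vtheta}(\tau)$ of $\{\vtheta_n\}$ analogous to $\bar{\vz}$ in (\ref{eq: z bar}). I would then rewrite the generator update (\ref{eq:algo_update}) in the form
\begin{equation}
\vtheta_{n+1} = \vtheta_n + b(n)\big[\vh(\vtheta_n,\lambda(\vtheta_n)) + \beta_n + {\mM}_{n}^{(\vtheta)}\big],
\end{equation}
where $\beta_n := \vh(\vtheta_n,\vw_n)-\vh(\vtheta_n,\lambda(\vtheta_n))$ is the perturbation caused by $\vw_n$ not sitting exactly at the attractor. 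By \textbf{(A1)} (Lipschitz $\vh$) together with \lemref{lem: trans_invar}, $\|\beta_n\|\le L\|\vw_n-\lambda(\vtheta_n)\|\to 0$ almost surely, so $\beta_n$ is an asymptotically negligible driving term, while the martingale noise ${\mM}_{n}^{(\vtheta)}$ is controlled through \textbf{(A2)}--\textbf{(A3)} exactly as in the equal time-scale analysis.

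Next I would invoke the tracking machinery of \thmref{thm: equal_traj} and \lemref{lem: asym_traj}, now applied to the single limiting ODE $\dot{\vtheta}(t)=\vh(\vtheta(t),\lambda(\vtheta(t)))$, to conclude that $\bar{\vtheta}(\tau)$ asymptotically almost surely follows this ODE on every finite time window. With the boundedness \textbf{(A4)} and the local asymptotic stability of $\vtheta^*$ postulated in \textbf{(A7)}, the Borkar argument then gives $\vtheta_n\to\vtheta^*$ almost surely, conditioned on the iterates eventually entering the domain of attraction. Combining this with $\vw_n\to\lambda(\vtheta_n)$ and continuity of $\lambda$ yields $(\vw_n,\vtheta_n)\to(\lambda(\vtheta^*),\vtheta^*)$ almost surely, completing the proof.

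The main obstacle I anticipate is rigorously controlling $\beta_n$ so that the slow-timescale interpolated trajectory genuinely tracks the averaged ODE $\dot{\vtheta}=\vh(\vtheta,\lambda(\vtheta))$: one must show that the cumulative effect of $\beta_n$ over any finite window vanishes, which relies on the separation of timescales \textbf{(A6)} ($b(n)=\mathrm{o}(a(n))$) to guarantee the fast variable equilibrates before the slow variable moves appreciably. In the federated setting the additional subtlety is that the every-$K$-step averaging must not disrupt this tracking, but this has already been absorbed into \lemref{lem: bound_var} and \lemref{lem: bound_grad_div}, so here the problem reduces to the classical Borkar argument. The only genuine limitation, as the paper notes, is that \textbf{(A7)} provides merely local stability, so the conclusion is conditional on the unverifiable event that $\{\vtheta_n\}$ (and hence $\{\vw_n\}$) eventually lands in the relevant region of attraction.
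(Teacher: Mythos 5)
Your proposal is correct and matches the paper's route: the paper proves this theorem purely by citation to Theorem 2, Chapter 6 of \citep{borkar2009stochastic}, after establishing \lemref{lem: asym_traj} and \lemref{lem: trans_invar} as the needed ingredients, and your sketch is precisely a reconstruction of that cited two time-scale argument (slow-timescale interpolation, the vanishing perturbation $\beta_n$ controlled via \lemref{lem: trans_invar} and Lipschitzness, then local stability from \textbf{(A7)}). You fill in more of Borkar's machinery than the paper does, but there is no substantive difference in approach.
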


\section{Proof of Results in Section \ref{sec:conv_analy}}\label{app: proof}
\begin{proof}	[Proof of \lemref{lem: bound_var}]
		\begin{equation}
		\label{eq: lem1_1}
		\begin{split}
		\E ||\vw^i_n-\vv_n||+\E||\vtheta^i_n-\vphi_n||&\leq \E||\vw^i_{n-1}-\vv_{n-1}-a(n-1)[\tilde{\vg}_i(\vw^i_{n-1}, \vtheta^i_{n-1})-\vg(\vv_{n-1}, \vphi_{n-1})]||\\
		& + \E||\vtheta^i_{n-1}-\vphi_{n-1}-b(n-1)[\tilde{\vh}_i(\vw^i_{n-1}, \vtheta^i_{n-1})-\vh(\vv_{n-1}, \vphi_{n-1})]||\\
		&\leq \E||\vw^i_{n-1}-\vv_{n-1}|| + \E||\vtheta^i_{n-1}-\vphi_{n-1}||\\
		&+a(n-1)\E||\tilde{\vg}_i(\vw^i_{n-1}, \vtheta^i_{n-1})-\vg(\vv_{n-1},\vphi_{n-1})||\\
		&+b(n-1)\E||\tilde{\vh}_i(\vw^i_{n-1}, \vtheta^i_{n-1})-\vh(\vv_{n-1},\vphi_{n-1})||
		\end{split}
		\end{equation}	
		The latter part on the right hand side can be written as
		\begin{equation}
		\label{eq: lem1_2}
		\begin{split}
		&a(n-1)\E||\tilde{\vg}_i(\vw^i_{n-1}, \vtheta^i_{n-1})-\vg(\vv_{n-1},\vphi_{n-1})||
		+b(n-1)\E||\tilde{\vh}_i(\vw^i_{n-1}, \vtheta^i_{n-1})-\vh(\vv_{n-1},\vphi_{n-1})||\\
		&=a(n-1)\E||[\tilde{\vg}_i(\vw^i_{n-1}, \vtheta^i_{n-1})-\vg_i(\vw^i_{n-1}, \vtheta^i_{n-1})+\vg_i(\vw^i_{n-1}, \vtheta^i_{n-1})-\vg_i(\vv_{n-1},\vphi_{n-1})\\
		&+\vg_i(\vv_{n-1},\vphi_{n-1})-\vg(\vv_{n-1},\vphi_{n-1})]||+b(n-1)\E||[\tilde{\vh}_i(\vw^i_{n-1}, \vtheta^i_{n-1})-\vh_i(\vw^i_{n-1}, \vtheta^i_{n-1})\\
		&+\vh_i(\vw^i_{n-1}, \vtheta^i_{n-1})-\vh_i(\vv_{n-1},\vphi_{n-1})+\vh_i(\vv_{n-1},\vphi_{n-1})-\vh(\vv_{n-1},\vphi_{n-1})]||\\
		&\leq a(n-1)(\sigma_g+\mu_g)+b(n-1)\sigma_h + [a(n-1)+b(n-1)]L[\E||\vw^i_{n-1}-\vv_{n-1}||+\E||\vtheta^i_{n-1}-\vphi_{n-1}||]\\
		&=a(n-1)[\sigma_g+\mu_g+\sigma_h+ 2L(\E||\vw^i_{n-1}-\vv_{n-1}|| + \E||\vtheta^i_{n-1}-\vphi_{n-1}||)]
		\end{split}
		\end{equation}	
		Here $\sigma_g$ and $\sigma_h$ are bounds for the variances for discriminator and generator respectively, and $\mu_g$ is the bound for gradient divergence for discriminator (Assumption (\textbf{A5})). The above inequality follows from Assumption (\textbf{A1}) and  $\vh_i(\vv_{n-1},\vphi_{n-1})=\vh(\vv_{n-1},\vphi_{n-1})$ which holds because the fake data is generated based on parameters of the generator. Also, the last equality holds because $a(n-1)=b(n-1)$ in equal time-scale updates.
		
		Considering (\ref{eq: lem1_1}) and (\ref{eq: lem1_2}), we have
		\begin{equation}
		\begin{split}
		\E ||\vw^i_n-\vv_n||&+\E||\vtheta^i_n-\vphi_n||\\
		&\leq (1+2a(n-1)L)(\E||\vw^i_{n-1}-\vv_{n-1}|| + \E||\vtheta^i_{n-1}-\vphi_{n-1}||)+a(n-1)(\sigma_g+\mu_g+\sigma_h)\\
		&\leq \frac{\sigma_g+\mu_g+\sigma_h}{2L}[(1+2a(n-1)L)^{n\, \text{mod}\, K}-1]
		\end{split}
		\end{equation}
		The last inequality can be holds by induction over $n$ and a mild assumption that the learning rate is unchanged within the same synchronization interval. 
	\end{proof}

\begin{proof}[Proof of \lemref{lem: bound_grad_div}]
		\begin{equation}
		\begin{split}
		&\E||{\vw}_{n}-\vv_{n}||+\E||{\vtheta}_{n}-\vphi_{n}||\\
		&\leq \E||{\vw}_{n-1}-\vv_{n-1}+a(n-1)\sum_i p_i [\tilde{\vg}_i(\vw^i_{n-1}, \vtheta^i_{n-1})-\vg_i(\vv_{n-1},\vphi_{n-1})]||\\
		&+ \E||{\vtheta}_{n-1}-\vphi_{n-1}+b(n-1)\sum_i p_i [\tilde{\vh}_i(\vw^i_{n-1}, \vtheta^i_{n-1})-\vh_i(\vv_{n-1},\vphi_{n-1})]||
		\end{split}
		\end{equation}
		
		We have
		\begin{equation}
		\begin{split}
		&\E||\tilde{\vg}_i(\vw^i_{n-1}, \vtheta^i_{n-1})-\vg_i(\vv_{n-1},\vphi_{n-1})||+\E||\tilde{\vh}_i(\vw^i_{n-1}, \vtheta^i_{n-1})-\vh_i(\vv_{n-1},\vphi_{n-1})||\\
		&=\E||\tilde{\vg}_i(\vw^i_{n-1}, \vtheta^i_{n-1})-\vg_i(\vw^i_{n-1}, \vtheta^i_{n-1})+\vg_i(\vw^i_{n-1}, \vtheta^i_{n-1})-\vg_i(\vv_{n-1},\vphi_{n-1})||\\
		&+\E||\tilde{\vh}_i(\vw^i_{n-1}, \vtheta^i_{n-1})-\vh_i(\vw^i_{n-1}, \vtheta^i_{n-1})+\vh_i(\vw^i_{n-1}, \vtheta^i_{n-1})-\vh_i(\vv_{n-1},\vphi_{n-1})||\\
		&\leq \sigma_g+\sigma_h+2L[\E||\vw^i_{n-1}-\vv_{n-1}||+\E||\vtheta^i_{n-1}-\vphi_{n-1}||]\\
		&\leq \sigma_g+\sigma_h+2L r_1(n-1)
		\end{split}
		\end{equation}
		where the last inequality follows from \lemref{lem: bound_var}. 
		
		Consequently,
		\begin{equation}
		\begin{split}
		&\E||{\vw}_{n}-\vv_{n}||+\E||{\vtheta}_{n}-\vphi_{n}|| \\
		&\leq \E||{\vw}_{n-1}-\vv_{n-1}||+\E||{\vtheta}_{n-1}-\vphi_{n-1}||+a(n-1) [\sigma_g+\sigma_h+2L r_1(n-1)]\\
		&=\E||{\vw}_{n-1}-\vv_{n-1}||+\E||{\vtheta}_{n-1}-\vphi_{n-1}||\\
		&+a(n-1)((\sigma_g+\sigma_h+\mu_g)[(1+2a(n-1)L)^{n\, \text{mod}\, K}-1]+\sigma_g+\sigma_h)\\
		&\leq a(n-1)(\sigma_g+\sigma_h+\mu_g)\sum_{j=1}^K (1+2a(n-1)L)^{j-1}-a(n-1)\mu_gK\\
		&=a(n-1)(\sigma_g+\sigma_h+\mu_g) \frac{(1+2a(n-1)L)^{K}-1}{2a(n-1)L}-a(n-1)\mu_gK\\
		&=\frac{(\sigma_g+\sigma_h+\mu_g)}{2L}[(1+2a(n-1)L)^{K}-1]-a(n-1)\mu_gK
		\end{split}
		\end{equation}
	\end{proof}
	
\begin{proof}[Proof of \thmref{thm: equal_traj}]
The proof is modified from Lemma 1 in Chapter 2 of \citep{borkar2009stochastic}.
		We shall only prove the first claim, as the arguments for proving the second claim are completely analogous. 
		Define 
		\begin{equation}
		\zeta^\vw_n=\sum_{m=0}^{n-1}a(m)\mM_{m}^{(\vw)}
		\end{equation}
		\begin{equation}
		\zeta^\vtheta_n=\sum_{m=0}^{n-1}b(m)\mM_{m}^{(\vtheta)}
		\end{equation}
		 Denote $\delta^{\vw}_{n,n+m}=\zeta^{\vw}_{n+m}-\zeta^{\vw}_n$, $\delta^{\vtheta}_{n,n+m}=\zeta^{\vtheta}_{n+m}-\zeta^{\vtheta}_n$. Let $\delta_{n,n+m} = (\delta^{\vw}_{n,n+m},\delta^{\vtheta}_{n,n+m})^\top$. Let  $t'(n)=t(nK)$ and $[t]=\max\{t(k): t(k)\leq t\}$. (Note that $t$ is overloaded here both as a function and a variable). By construction,
		\begin{equation}
		\begin{split}
		&\vc^{\vw}_0 = \bar{\vw}(t'(n))\\
		&\vc^{\vw}_k = \bar{\vw}(t'(n))  +\sum_{j=0}^{k-1} a(nK+j)\vg(\vc^{\vw}_{j}, \vc^{\vtheta}_{j}) + \delta^{\vw}_{nK,nK+k}
		\end{split}
		\end{equation}
		
		\begin{equation}
		\begin{split}
		&\vc^{\vtheta}_0 = \bar{\vtheta}(t'(n))\\
		&\vc^{\vtheta}_k = \bar{\vtheta}(t'(n))  +\sum_{j=0}^{k-1} b(nK+j)\vh(\vc^{\vw}_{j}, \vc^{\vtheta}_{j}) + \delta^{\vtheta}_{nK,nK+j}
		\end{split}
		\label{eq: recons_seq1t}
		\end{equation}
		Furthermore, denote
		\begin{equation}
		\vc_k=(\vc^{\vw}_k, \vc^{\vtheta}_k)^\top=\bar{\vz}(t'(n))+\sum_{j=0}^{k-1}a(nK+j)\vf(\vc_j)+\delta_{nK,nK+j}
		\label{eq: recons_seq1}
		\end{equation}
		
		Also by construction,
		
		\begin{equation}
		\begin{split}
		\vd^{\vw}_0 &= \bar{\vw}(t'(n))\\
		\vd^{\vw}_k &=\bar{\vw}(t'(n))+\sum_{j=0}^{k-1} a(nK+j)\vg(\vd^{\vw}_{j}, \vd^{\vtheta}_{j})\\
		&+\int_{t(nK)}^{t(nK+k)}(\vg( \vw^{t'(n)}(y),\vtheta^{t'(n)}(y))-\vg( \vw^{t'(n)}([y]),  \vtheta^{t'(n)}([y])))dy
		\end{split}
		\end{equation}
		
		\begin{equation}
		\begin{split}
		\vd^{\vtheta}_0 &= \bar{\vtheta}(t'(n))\\
		\vd^{\vtheta}_k &=\bar{\vtheta}(t'(n))+\sum_{j=0}^{k-1} b(nK+j)\vh(\vd^{\vw}_{j}, \vd^{\vtheta}_{j})\\
		&+\int_{t(nK)}^{t(nK+k)}(\vh( \vw^{t(n)}(y),\vtheta^{t(n)}(y))-\vh( \vw^{t(n)}([[y]]),  \vtheta^{t(n)}([y])))dy
		\end{split}
		\label{eq: recons_seq2t}
		\end{equation}
		
		Further denote
		\begin{equation}
		\vd_k=(\vd^{\vw}_k, \vd^{\vtheta}_k)^\top
		\label{eq: recons_seq2}
		\end{equation}
		
		Let $C_0=\sup_n||\vw_n||<\infty$, $C_1=\sup_n||\vtheta_n||<\infty$ almost surely (Assumption (\textbf{A4})), let $L>0$ denote the Lipschitz constant of $\vg$ and $\vh$, and let $s\leq t\leq s+T$. Note that $||\vg(x)-\vg(0)||\leq L||x||$, and so $||\vg(x)||\leq ||\vg(0)||+L||x||$. Similar inequalities also hold for $\vh$. Since $\vw^s(t)=\bar{\vw}(s)+\int_s^t \vg(\vw^s(\tau), \vtheta^s(\tau))\vd\tau$ and $\vtheta^s(t)=\bar{\vtheta}(s)+\int_s^t \vh(\vw^s(\tau), \vtheta^s(\tau))\vd\tau$, for discriminator we have
		\begin{equation}
		\begin{split}
		||\vw^s(t)||&\leq ||\bar{\vw}(t)||+\int_s^t [||\vg(0,0)||+L||\vw^s(\tau)||+L||\vtheta^s(\tau)||]\vd\tau\\
		&\leq (C_0+||\vg(0)||T)+L\int_s^t(||\vw^s(\tau)||+||\vtheta^s(\tau)||)\vd\tau
		\end{split}
		\end{equation}
and for generator we have	\begin{equation}
		\begin{split}
		||\vtheta^s(t)||&\leq ||\bar{\vtheta}(t)||+\int_s^t [||\vh(0,0)||+L||\vw^s(\tau)||+L||\vtheta^s(\tau)||]\vd\tau\\
		&\leq (C_1+||\vh(0)||T)+L\int_s^t(||\vw^s(\tau)||+||\vtheta^s(\tau)||)\vd\tau
		\end{split}
		\end{equation}
		
		Adding the above two inequalities, we have
		\begin{equation}
		||\vw^s(t)||+||\vtheta^s(t)||\leq (C_0+C_1+||\vg(0)||T+||\vh(0)||T)+2L\int_s^t(||\vw^s(\tau)||+||\vtheta^s(\tau)||)\vd\tau
		\end{equation}
		
		By Gronwall's inequality, it follows that 
		\begin{equation}
		||\vw^s(t)||+||\vtheta^s(t)||\leq (C_0+C_1+||\vg(0)||T+||\vh(0)||T)e^{2LT}, s\leq t\leq s+T
		\end{equation}
		
		Thus, for all $s\leq t\leq s+T$
		\begin{equation}
		||\vg(\vw^s(t), \vtheta^s(t))||\leq C_T = ||\vg(0)||+L(C_0+C_1+||\vg(0)||T+||\vh(0)||T)e^{2LT}<\infty, a.s.
		\end{equation}
		\begin{equation}
		||\vh(\vw^s(t), \vtheta^s(t))||\leq C'_T = ||\vh(0)||+L(C_0+C_1+||\vg(0)||T+||\vh(0)||T)e^{2LT}<\infty, a.s.
		\end{equation}
		Now, if $0 \leq j< mK$ and $t\in (t'(nK+j),t'(nK+j+1)]$,
		\begin{equation}
		\begin{split}
		|| \vw^{t(n)}(t)- \vw^{t(n)}(t'(nK+j))|| &\leq ||\int_{t'(nK+j)}^t \vg(\vw^{t(n)}(s), \vtheta^{t(n)}(s))ds||\\
		&\leq C_T(t-t'(nK+j))\\
		&\leq C_Ta(nK+j) 
		\end{split}
		\end{equation}
		\begin{equation}
		\begin{split}
		|| \vtheta^{t(n)}(t)- \vtheta^{t(n)}(t'(nK+j)|| &\leq ||\int_{t'(nK+j)}^t \vh(\vw^{t(n)}(s), \vtheta^{t(n)}(s))ds||\\
		&\leq C'_T(t-t'(nK+j))\\
		&\leq C'_Ta(nK+j)
		\end{split}
		\end{equation}
		Thus,
		\begin{equation}
		\begin{split}
		&\int_{t(nK)}^{t(nK+k)}(\vg( \vw^{t'(n)}(t),\vtheta^{t’(n)}(t))-\vg( \vw^{t'(n)}([t]),\vtheta^{t’(n)}([t])))dt\\
		&\leq\int_{t(nK)}^{t(nK+k)}L(||\vw^{t'(n)}(t)-\vw^{t'(n)}([t]))||+||\vtheta^{t'(n)}(t)-\vtheta^{t'(n)}([t]))||)dt\\
		&=L \sum_{j=0}^{k-1} \int_{t(nK+j)}^{t(nK+j+1)}(||\vw^{t'(n)}(t)-\vw^{t’(n)}(t(nK+j)))|| +|| \vtheta^{t'(n)}(t)- \vtheta^{t’(n)}(t(nK+j)||)dt\\
		&\leq (C_T+C'_T) L\sum_{j=0}^{k-1} a(nK+j) ^2 \\
		&\leq (C_T+C'_T) L\sum_{j=0}^{\infty} a(nK+j) ^2  \xrightarrow{n \uparrow \infty }0, a.s.
		\end{split}
		\end{equation}
		
		Similarly, we have
		\begin{equation}
		\begin{split}
		&\int_{t(nK)}^{t(nK+k)}(\vh( \vw^{t'(n)}(t),\vtheta^{t’(n)}(t))-\vh( \vw^{t'(n)}([t]),  \vtheta^{t’(n)}([t])))dt\\
		&\leq (C_T+C'_T) L\sum_{j=0}^{\infty} a(nK+j) ^2   \xrightarrow{n \uparrow \infty }0, a.s.
		\end{split}
		\end{equation}
		

		Also by \lemref{lem: bound_var}, \lemref{lem: bound_grad_div}, we have
		\begin{equation}
		\sup_{k\geq 0}||\delta_{nK,nK+k}||\xrightarrow{n \uparrow \infty }0, a.s.
		\end{equation}
		
		Subtracting (\ref{eq: recons_seq2}) from (\ref{eq: recons_seq1}) and taking norms, we have
		\begin{equation}
		\begin{split}
		||\vc_k-\vd_k|| &\leq L \sum_{j=0}^{k-1} a(nK+j)||\vc_j-\vd_{j}||\\
		&+2(C_T+C'_T) L\sum_{j=0}^{\infty} a(nK+j) ^2+\sup_{k\geq 0}||\delta_{nK,nK+k}||, a.s.
		\end{split}
		\end{equation}
		Define $K_{T,n}=2(C_T+C'_T) L\sum_{j=0}^{\infty} a(nK+j) ^2+\sup_{k\geq 0}||\delta_{nK,nK+k}||$. Note that $K_{T,n}\rightarrow 0$ almost surely as $n\rightarrow \infty$ (by Assumption (\textbf{A2})). Also, let $\vy_k=||\vc_k-\vd_k||$ and $q_j=a(nK+j)$. Thus, the above inequality becomes
		\begin{equation}
		\vy_k\leq K_{T,n}+L\sum_{j=0}^{k-1}q_j\vy_j
		\end{equation}
		Note that $y_0=0$ and $\sum_{j=0}^{k-1}q_j\leq KT$. The discrete Gronwall lemma tells that 
		\begin{equation}
		\label{eq: bound_lin_interpo}
		\sup_k \vy_k\leq  K_{T,n} e^{L KT}
		\end{equation}
		
		Since $K_{T,n}\rightarrow 0$ almost surely as $n\rightarrow \infty$, the original statement of the Theorem follows directly from linear interpolation of (\ref{eq: bound_lin_interpo}).
	\end{proof}

\section{Toy Examples in Section \ref{sec:toy}: 2D System, Mixed Gaussian and Swiss Roll}\label{app: toy}

\citep{nagarajan2017gradient} experiments convergence of GANs using a 2D example. We extend this experiment for FedGAN setup. We consider a 2D system where both the true distribution for $x$ and the latent distribution for $z$ are uniform over $[-1, 1]$, the discriminator is $D(x) = \psi x^2$, and the generator is $G(z) = \theta z$. For the federated experiment, we assume there are $B=5$ agents by partitioning the data domain into $5$ equal segments with each agent's data coming from one (e.g. Agent $1$'s true distribution will have be uniform over $[-1,-0.6)$). We set synchronization interval $K=1,5, 20, 50$.
	
	The $(\theta, \phi)$ trajectory is plotted in Figure \ref{fig:2d}. In all cases FedGAN converges to the same point of $(1, 0)$. At this point, the generator generated data with uniform distribution between $[-1,1]$, and the discriminator classifies all data in one class (fails to discriminate). This experiment shows robustness of the FedGAN to increasing parameter $K$ and reducing communications. In other words, reducing the communications has low impact on the algorithm result. Note that while $(1,0)$ is the convergence point of the centralized GAN, it may not be the case in general as we observe in later experiments.

	\begin{figure}[ht]
		\begin{subfigure}[t]{0.5\columnwidth}
		
			\centerline{\includegraphics[width=\textwidth]{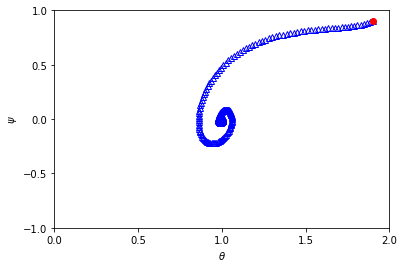}}
			\caption{K=1}
		\end{subfigure}\hfill
		\begin{subfigure}[t]{0.5\columnwidth}
		
			\centerline{\includegraphics[width=\textwidth]{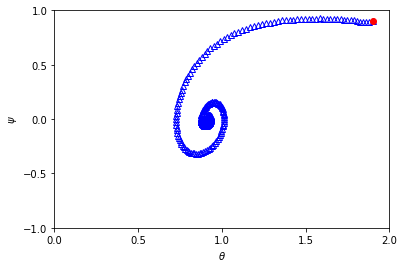}}
			\caption{K=5}
		\end{subfigure}\hfill
		\begin{subfigure}[t]{0.5\columnwidth}
		
			\centerline{\includegraphics[width=\textwidth]{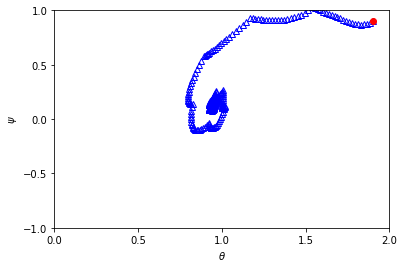}}
			\caption{K=20}
		\end{subfigure}\hfill
		\begin{subfigure}[t]{0.5\columnwidth}
		
			\centerline{\includegraphics[width=\textwidth]{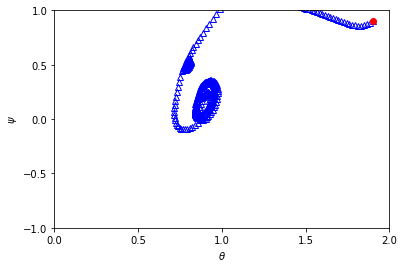}}
			\caption{K=50}
		\end{subfigure}\hfill
		
		\caption{FedGAN parameters trajectory in a 2D experiment, generator $\theta$ and discriminator $\phi$ with $B=5$ agents and synchronization intervals $K=1,5,20, 50$. The red dot is the initial values.}
		\label{fig:2d}
	\end{figure}

    \citep{Metz2016UnrolledGA} and \citep{gulrajani2017improved} train GAN on the popular 2D mixture of eight Gaussians arranged in a circle, and Swiss roll dataset. We extend both experiments to FedGAN by dividing the data into $B=4$ agents (each owning 2 Gaussians or different but equal-sized part of the roll). We set $K=5$. The neural network structure used is the same as that in \citep{kodali2017convergence}.
    
    In Figure \ref{fig:mix_gau} and \ref{fig:swiss_roll},  for mixed Gaussian and Swiss role FedGAN experiments, the orange points represent the real data and the green points represent the generated data which are almost coinciding for $N=15000$ and $N=27000$ iterations correspondingly. These ensures high performance of FedGAN in generating fake samples representing the pooled data.

	\begin{figure}[ht]
		
		\begin{subfigure}[t]{0.5\columnwidth}
			\centerline{\includegraphics[width=\textwidth]{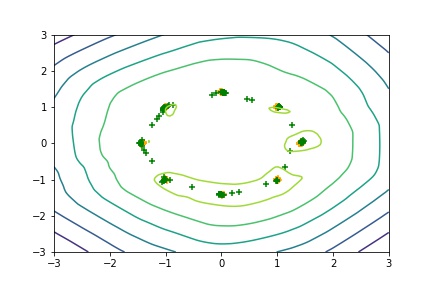}}
			\caption{iter=100}
		\end{subfigure}\hfill
		\begin{subfigure}[t]{0.5\columnwidth}
			\centerline{\includegraphics[width=\textwidth]{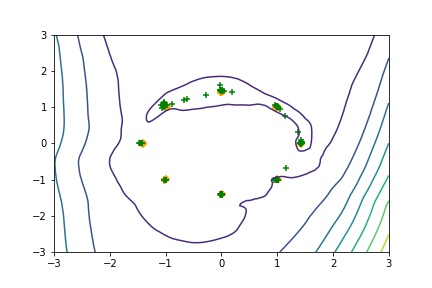}}
			\caption{iter=5000}
		\end{subfigure}\hfill
		\begin{subfigure}[t]{0.5\columnwidth}
			\centerline{\includegraphics[width=\textwidth]{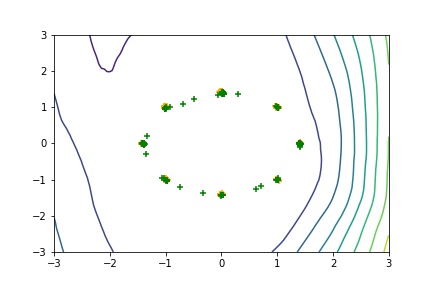}}
			\caption{iter=10000}
		\end{subfigure}\hfill
		\begin{subfigure}[t]{0.5\columnwidth}
			\centerline{\includegraphics[width=\textwidth]{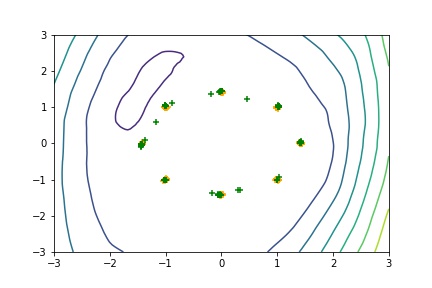}}
			\caption{iter=15000}
		\end{subfigure}\hfill
		\caption{Mixed Gaussians real data, and FedGAN generated data  with $B=4$ agents and synchronization interval $K=5$.}
		\label{fig:mix_gau}
	\end{figure}
	

	\begin{figure}[ht]
		
		\begin{subfigure}[t]{0.5\columnwidth}
			\centerline{\includegraphics[width=\textwidth]{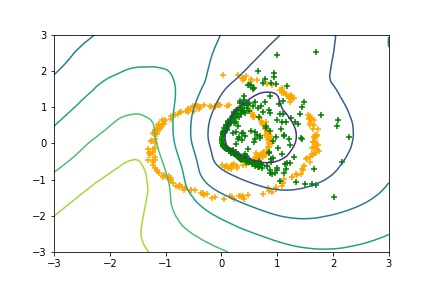}}
			\caption{iter=100}
		\end{subfigure}\hfill
		\begin{subfigure}[t]{0.5\columnwidth}
			\centerline{\includegraphics[width=\textwidth]{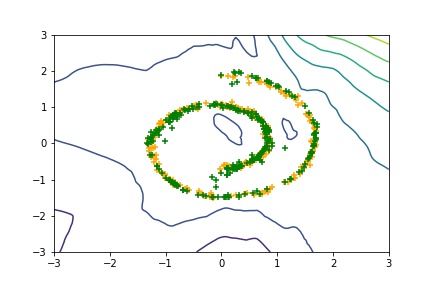}}
			\caption{iter=9000}
		\end{subfigure}\hfill
		\begin{subfigure}[t]{0.5\columnwidth}
			\centerline{\includegraphics[width=\textwidth]{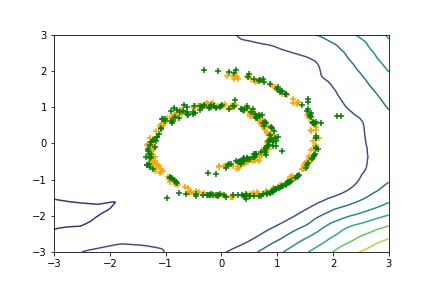}}
			\caption{iter=18000}
		\end{subfigure}\hfill
		\begin{subfigure}[t]{0.5\columnwidth}
			\centerline{\includegraphics[width=\textwidth]{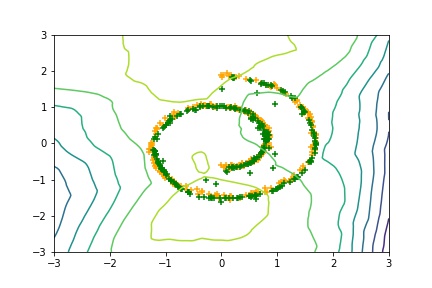}}
			\caption{iter=27000}
		\end{subfigure}\hfill
		\caption{Swiss role real data, and FedGAN generated data  with $B=4$ agents and synchronization interval $K=5$.}
		\label{fig:swiss_roll}
	\end{figure}

\section{Hyperparameters and Generated Images for Experiments in Section  \ref{sec:exp_img} and  \ref{sec:exp_energy} } \label{app: net_hyper}	
For our proposed FedGAN algorithm, the hyperparamters include those of local generators and discriminators.
For the experiments in Section \ref{sec:exp_img}, Table \ref{tab:cifar10} lists the hyperparameters used for CIFAR-10, and  Table \ref{tab:celebA} shows those for CelebA. The hyperparameters used for MNIST are the same as CIFAR-10 except that the input dimension of discriminator and the output dimension of generator (number of channels of greyscale images) are both equal to $1$. More samples of generated images for CIFAR-10 and CelebA experiments are shown in Figure \ref{fig:cifar_matrix} and \ref{fig:celebA_matrix} respectively.

The hyperparameters for the PG\&E and EV time series data in Section \ref{sec:exp_energy} are shown in Table \ref{tab:time_series}.  

\begin{table}[htbp]
\caption{CIFAR-10 hyperparameters. The learning rates for generator and discriminator are both equal to the same value, across all cases in Figure \ref{cifar10_fid}. BN stands for batch normalization.}
\centering
\begin{tabular}{llllll}
\hline 
Operation& Kernel &Strides &Feature maps & BN? & Nonlinearity\\
\hline 
$G(z)$ - 62 $\times$ 1 $\times$ 1 input  & & & & & \\
Linear & N/A & N/A&1024 & Y & ReLU\\
Linear & N/A & N/A&128 & Y & ReLU\\
Transposed Convolution &4$\times$ 4 & 2$\times$ 2 &64 & Y & ReLU\\
Transposed Convolution &4$\times$ 4 & 2$\times$ 2 &3 & N & Tanh\\
$D(x)$ - 32 $\times$ 32 $\times$ 3 input  & & & & & \\
Convolution &4$\times$ 4 & 2$\times$ 2 &64 & N & Leaky ReLU\\
Convolution &4$\times$ 4 & 2$\times$ 2 &128 & Y & Leaky ReLU\\
Linear & N/A & N/A&1024 & Y &Leaky ReLU\\
Linear (binary) & N/A & N/A&1 & Y &Sigmoid\\
Linear (classify)& N/A & N/A&10 & Y &-\\
Generator Optimizer &\multicolumn{5}{l}{Adam($\alpha=0.001, \beta_1=0.5, \beta_2=0.999$)}\\
Discriminator Optimizer &\multicolumn{5}{l}{Adam($\alpha=0.001, \beta_1=0.5, \beta_2=0.999$)}\\
Batch size &\multicolumn{5}{l}{64}\\
Leaky ReLU slope &\multicolumn{5}{l}{0.2}\\
\hline
\end{tabular}
\label{tab:cifar10}
\end{table}

\begin{figure}[ht]
		
		\begin{subfigure}[t]{0.5\columnwidth}
			\centerline{\includegraphics[width=\textwidth]{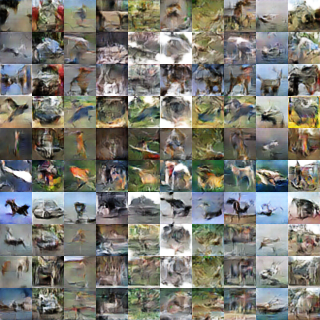}}
			\caption{$K=10$}
		\end{subfigure}\hfill
		\begin{subfigure}[t]{0.5\columnwidth}
			\centerline{\includegraphics[width=\textwidth]{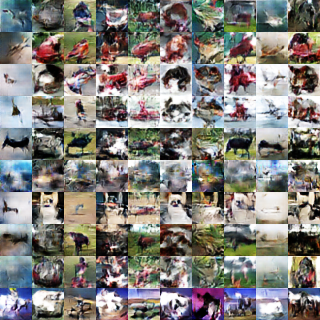}}
			\caption{$K=20$}
		\end{subfigure}\hfill
		\begin{subfigure}[t]{0.5\columnwidth}
			\centerline{\includegraphics[width=\textwidth]{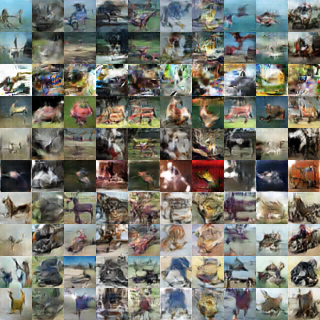}}
			\caption{$K=100$}
		\end{subfigure}\hfill
		\begin{subfigure}[t]{0.5\columnwidth}
			\centerline{\includegraphics[width=\textwidth]{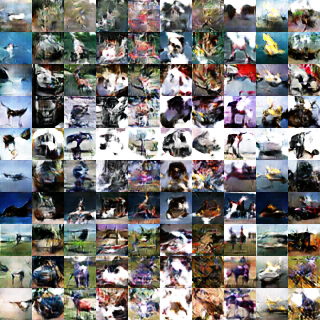}}
			\caption{$K=500$}
		\end{subfigure}\hfill
		\begin{subfigure}[t]{0.5\columnwidth}
			\centerline{\includegraphics[width=\textwidth]{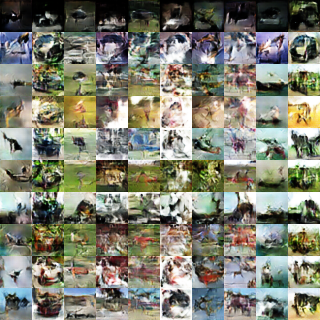}}
			\caption{$K=3000$}
		\end{subfigure}\hfill
		\begin{subfigure}[t]{0.5\columnwidth}
			\centerline{\includegraphics[width=\textwidth]{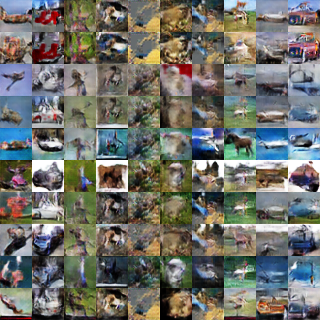}}
			\caption{Distributed GAN}
		\end{subfigure}\hfill
		\caption{Generated images for CIFAR-10 with $B=5$, $K=10,20,100,500,3000$ and distributed GAN, $N=30000$ iteration.}
		\label{fig:cifar_matrix}
	\end{figure}

\begin{table}[htbp]
\caption{CelebA hyperparameters. The learning rates for generator and discriminator are for $K=10,20,50,100,200$ and distributed GAN respectively as in Figure \ref{fig:celebA_fid}. BN stands for batch normalization.}
\centering
\begin{tabular}{llllll}
\hline 
Operation& Kernel &Strides &Feature maps & BN? & Nonlinearity\\
\hline 
$G(z)$ - 62 $\times$ 1 $\times$ 1 input  & & & & & \\
Linear & N/A & N/A&640 & Y & ReLU\\
Transposed Convolution &4$\times$ 4 & 2$\times$ 2 &320 & Y & ReLU\\
Transposed Convolution &4$\times$ 4 & 2$\times$ 2 &160 & Y & ReLU\\
Transposed Convolution &4$\times$ 4 & 2$\times$ 2 &80 & Y & ReLU\\
Transposed Convolution &4$\times$ 4 & 2$\times$ 2 &3 & N & Tanh\\
$D(x)$ - 48 $\times$ 48 $\times$ 3 input  & & & & & \\
Convolution &4$\times$ 4 & 2$\times$ 2 &80 & N & Leaky ReLU\\
Convolution &4$\times$ 4 & 2$\times$ 2 &160 & Y & Leaky ReLU\\
Convolution &4$\times$ 4 & 2$\times$ 2 &320 & Y & Leaky ReLU\\
Convolution &4$\times$ 4 & 2$\times$ 2 &640 & Y & Leaky ReLU\\
Linear (binary) & N/A & N/A&1 & Y &-\\
Linear (classify)& N/A & N/A&16 & Y &Sigmoid\\
Generator Optimizer &\multicolumn{5}{l}{Adam($\alpha=[1,1,1,1,1,1]\times 10^{-4}, \beta_1=0.5, \beta_2=0.999$)}\\
Discriminator Optimizer &\multicolumn{5}{l}{Adam($\alpha=[2,2,2,2,2,1]\times 10^{-4}, \beta_1=0.5, \beta_2=0.999$)}\\
Batch size &\multicolumn{5}{l}{128}\\
Leaky ReLU slope &\multicolumn{5}{l}{0.2}\\
\hline
\end{tabular}
\label{tab:celebA}
\end{table}

\begin{figure}[ht]
		
		\begin{subfigure}[t]{0.5\columnwidth}
			\centerline{\includegraphics[width=\textwidth]{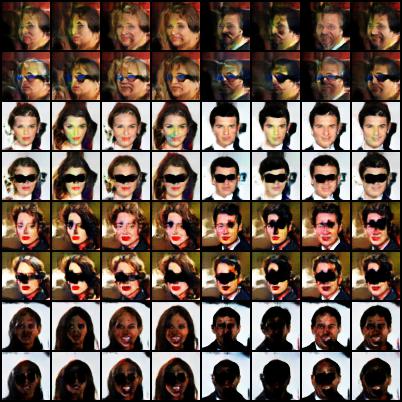}}
			\caption{$K=10$}
		\end{subfigure}\hfill
		\begin{subfigure}[t]{0.5\columnwidth}
			\centerline{\includegraphics[width=\textwidth]{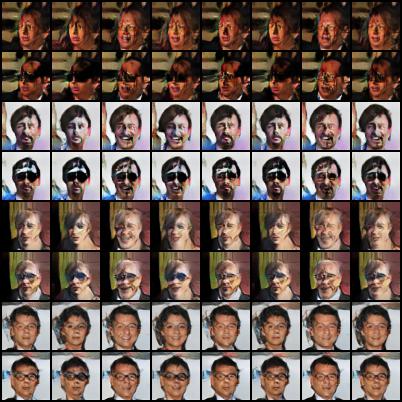}}
			\caption{$K=20$}
		\end{subfigure}\hfill
		\begin{subfigure}[t]{0.5\columnwidth}
			\centerline{\includegraphics[width=\textwidth]{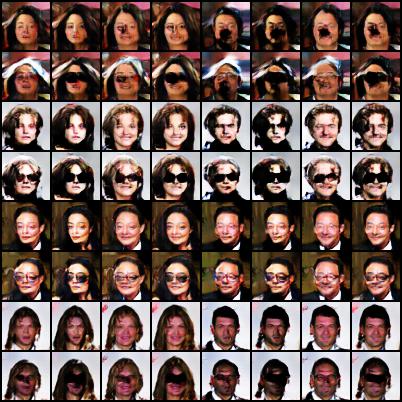}}
			\caption{$K=50$}
		\end{subfigure}\hfill
		\begin{subfigure}[t]{0.5\columnwidth}
			\centerline{\includegraphics[width=\textwidth]{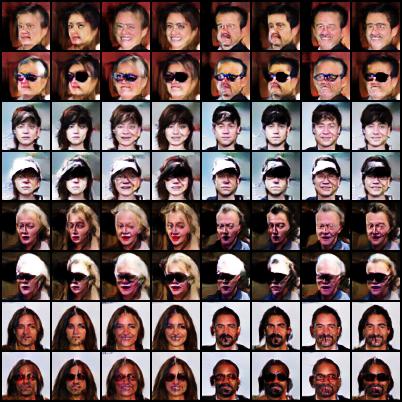}}
			\caption{$K=100$}
		\end{subfigure}\hfill
		\begin{subfigure}[t]{0.5\columnwidth}
			\centerline{\includegraphics[width=\textwidth]{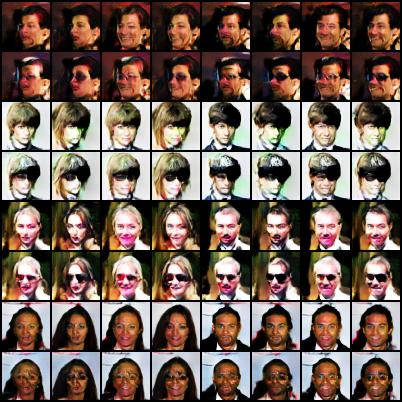}}
			\caption{$K=200$}
		\end{subfigure}\hfill
		\begin{subfigure}[t]{0.5\columnwidth}
			\centerline{\includegraphics[width=\textwidth]{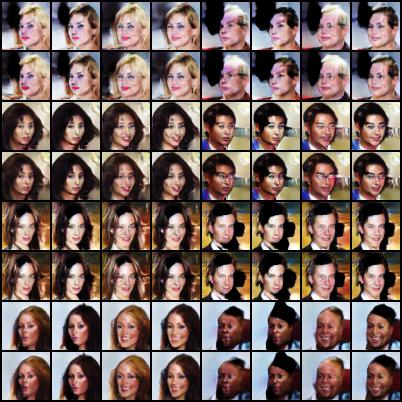}}
			\caption{Distributed GAN}
		\end{subfigure}\hfill
		\caption{Generated images for CelebA with $B=5$, $K=10,20,50,100,200$ and distributed GAN, $N=47500$ iteration.}
		\label{fig:celebA_matrix}
	\end{figure}

\begin{table}[htbp]
\caption{Time series data, PG\&E and EV, hyperparameters. }
\centering
\begin{tabular}{lllll}
\hline 
Operation& Kernel &Strides &Feature maps & Nonlinearity\\
\hline 
$G(z)$ - (label dimension+1) $\times$ 24  input  & & & & \\
1d convolution & 5 & 1 & 64  & -\\
1d convolution & 5 & 1 & 64  & ReLU\\
\multicolumn{5}{l}{Repeat the above 1d convolution 8 times}\\
1d convolution & 1 & 1 & 1  & -\\
$D(x)$ -  (label dimension+1) $\times$ 24  input  & & & & \\
1d convolution & 5 & 1 & 64 &  -\\
1d convolution & 5 & 1 & 64 &  ReLU\\
\multicolumn{5}{l}{Repeat the above 1d convolution 8 times}\\
Linear & 1 & 1 & 1  & -\\
Generator Optimizer &\multicolumn{4}{l}{Adam($\alpha=0.0001\times 10^{-4}, \beta_1=0.5, \beta_2=0.999$)}\\
Discriminator Optimizer &\multicolumn{4}{l}{Adam($\alpha=0.0004\times 10^{-4}, \beta_1=0.5, \beta_2=0.999$)}\\
Batch size &\multicolumn{4}{l}{256}\\
\hline
\end{tabular}
\label{tab:time_series}
\end{table}	
	
\section{Supplementary EV Data Description in Section \ref{sec:exp_energy}}\label{app: data}

For the EV dataset, we observe the following characteristics at the station level: station ID, connector type (e.g., J1772), POI category (e.g., workplace, retail, municipal), 	POI subcategory (e.g., commercial, high-tech), station zip code, and max power (e.g., 6.6kW, 24kW, 50kW). Also, at the user level, we observe: driver ID, home zip code, vehicle make, vehicle model, vehicle model year, battery capacity, and EV type (e.g., plugin, hybrid). For the charging profile generating problem considered in Section \ref{sec:exp_energy}, we are using POI category/subcategory, max power, battery capacity, month, and day of week as labels for CGAN. 

We split the charging dataset to $5$ agents based on the category of charging stations. Therefore, the data distribution across the agents is non-iid; each agent has a different distribution of charging profiles. Figure \ref{fig: ev_distri} shows this non-iid data distribution by comparing two charging stations (each belonging to a different agent), a high-tech workplace and a shopping center. The plots are all the charging sessions on Tuesdays for these two stations. Most charging sessions in the high-tech workplace happen during the day time which is consistent with people working close by. On the other hand, charging sessions in the shopping center last till midnight which is probably when it closes. 

\begin{figure}[htbp]
		\begin{subfigure}[b]{0.5\textwidth}
			\centerline{\includegraphics[scale=0.4]{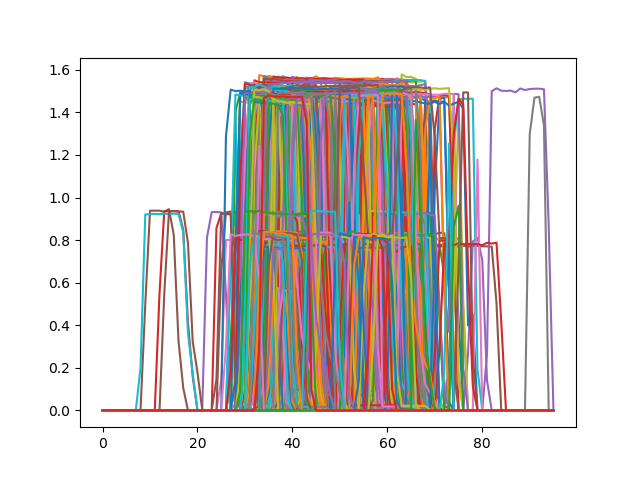}}
			\caption{High-tech workplace}
		\end{subfigure}
		\begin{subfigure}[b]{0.5\textwidth}
			\centerline{\includegraphics[scale=0.4]{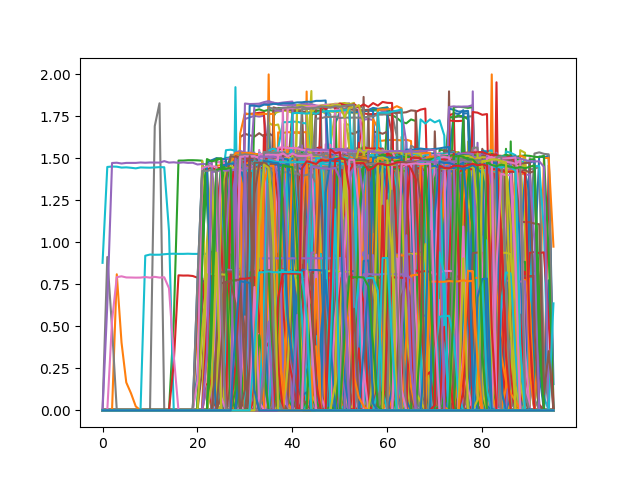}}
			\caption{Shopping center}
		\end{subfigure}
		\caption{Charging profiles on Tuesday for two stations from different categories.}
		\label{fig: ev_distri}
	\end{figure}

\end{document}